\documentclass{article}
\usepackage[T1]{fontenc}
\usepackage{iclr2027_conference,times}
\usepackage{amsmath,amsfonts,bm}

\def\eqref#1{equation~\ref{#1}}
\def\1{\bm{1}}

\def\rva{{\mathbf{a}}}

\def\rvg{{\mathbf{g}}}

\def\rvu{{\mathbf{i}}}

\def\rvm{{\mathbf{m}}}

\def\rvo{{\mathbf{o}}}

\def\rvq{{\mathbf{q}}}

\def\rvs{{\mathbf{s}}}

\def\rvu{{\mathbf{u}}}
\def\rvv{{\mathbf{v}}}
\def\rvw{{\mathbf{w}}}
\def\rvx{{\mathbf{x}}}
\def\rvy{{\mathbf{y}}}
\def\rvz{{\mathbf{z}}}

\def\rmD{{\mathbf{D}}}

\def\rmI{{\mathbf{I}}}

\def\rmM{{\mathbf{M}}}

\def\rmR{{\mathbf{R}}}
\def\rmS{{\mathbf{S}}}

\def\rmU{{\mathbf{U}}}
\def\rmV{{\mathbf{V}}}

\def\rmX{{\mathbf{X}}}
\def\rmY{{\mathbf{Y}}}

\DeclareMathAlphabet{\mathsfit}{\encodingdefault}{\sfdefault}{m}{sl}
\SetMathAlphabet{\mathsfit}{bold}{\encodingdefault}{\sfdefault}{bx}{n}

\def\gN{{\mathcal{N}}}

\def\sN{{\mathbb{N}}}

\def\sR{{\mathbb{R}}}
\def\sS{{\mathbb{S}}}

\newcommand{\E}{\mathbb{E}}

\newcommand{\Var}{\mathrm{Var}}

\DeclareMathOperator*{\argmax}{arg\,max}
\DeclareMathOperator*{\argmin}{arg\,min}

\newcommand{\rvmu}{\mathbf{\mu}}

\newcommand{\rmSigma}{\mathbf{\Sigma}}

\usepackage{amsmath,amssymb,amsfonts,amsthm}
\usepackage{mathtools}
\usepackage{bm}
\usepackage{booktabs}
\usepackage{array}
\usepackage{multirow}
\usepackage{graphicx}
\usepackage{subcaption}
\usepackage{float}
\usepackage{hyperref}
\usepackage{url}
\usepackage{dsfont}
\usepackage{xcolor}

\newcommand{\N}{\mathcal{N}}

\newcommand{\diag}{\operatorname{diag}}
\newcommand{\Tra}{\operatorname{tr}}
\newcommand{\SIGReg}{\operatorname{SIGReg}}
\newcommand{\enc}{\operatorname{enc}_\theta}
\newcommand{\pred}{\operatorname{pred}_\phi}

\newcommand{\MN}{ALeWM}
\definecolor{paperpurple}{RGB}{128,0,128}

\definecolor{introgreen}{RGB}{0,128,0}

\newtheorem{proposition}{Proposition}
\newtheorem{corollary}[proposition]{Corollary}
\newtheorem{informalproposition}{Informal proposition}

\title{Adaptive Latent Capacity for World Models}

\author{Idan Achituve, Lior Dikstein, Idit Diamant, Arnon Netzer \& Hai Victor Habi   \\
Arm Research \thanks{ Correspondence to idan.achituve@arm.com}\\
Israel \\
}

\iclrfinalcopy % Uncomment for camera-ready version, but NOT for submission.

\begin{document}
\maketitle

\begin{abstract}
We introduce Adaptive LeWorldModel (\MN{}), a world model based on a joint-embedding predictive architecture (JEPA) that learns to concentrate predictive information in compact prefixes of a wide latent representation. To encourage this ordering, \MN{} learns a sequence-conditioned distribution over prefix lengths and trains the predictor to estimate the full next embedding from a sampled input prefix. As standard anti-collapse objectives
encourage variation across latent coordinates and do not organize them by predictive importance, we also introduce MixSIGReg. MixSIGReg regularizes the masked embeddings against a prior-weighted mixture with Gaussian active prefixes and zeros in the remaining coordinates. As a result, the \MN{} objective encourages early coordinates to retain information useful for
prediction and recursive planning. Our analysis shows that the mixture distribution used by MixSIGReg assigns higher variance to earlier coordinate blocks and lower variance to later ones. In addition, we show that, under specified assumptions, prediction error is minimized by placing the information most useful for prediction in earlier blocks. Empirically, we study the behavior of \MN{} in a controlled dynamical system with known state variables and in goal-conditioned visual control. We show that \MN{} consistently achieves higher mean success rates than tuned fixed-width LeWM, with lower planning capacity on average.
\end{abstract}

\section{Introduction}

World models allow agents to anticipate the consequences of actions and use imagined futures to guide behavior \citep{ha2018recurrent, hafner2025dreamerv3}.
Recent research directions include world action models, which jointly generate future observations and actions for robot control \citep{agarwal2025cosmos, kim2026cosmos, li2026causal, ye2026dreamzero}, and explicit 3D world generation and reconstruction of dynamic scenes \citep{hunyuanworld2025,yang2025thinking, zhang2025d4rt}.
Joint-embedding predictive architectures (JEPAs) offer a different route: they predict future embeddings without reconstructing observations \citep{lecun2022path,assran2025vjepa2}.
LeWorldModel (LeWM) makes this approach accessible through a compact encoder and action-conditioned predictor trained jointly from scratch \citep{maes2026leworldmodel}.
Its two-term objective combines next-embedding prediction with SIGReg, which discourages collapse by encouraging Gaussian-distributed embeddings \citep{balestriero2025lejepa}.
This reward-free, end-to-end formulation provides a practical foundation for studying how learned representations support planning.

In latent planning, the learned representation is the state carried through every imagined trajectory. Recent studies of LeWM highlight the importance of how latent states are organized and used for control \citep{li2026rcaux,nguyen2026latentgeometry}. We focus on a complementary question: can a world model retain a wide embedding for learning while concentrating predictive information in a short prefix, consisting of its first few coordinates? Such an ordering would let one trained model support different latent-state budgets without fitting a separate representation for each width. Compact prefixes could offer several benefits: helping planning focus on control-relevant information while limiting the influence of irrelevant visual details, allowing different tasks or episodes to use prefix lengths suited to their information needs, and leaving the remaining coordinates available for features useful to other downstream tasks. Ordered representations already provide this flexibility in reconstruction and retrieval \citep{pmlr-v32-rippel14,kusupati2022matryoshka} using fixed, input-independent weighting over prefix lengths. For a world model, the additional challenge is to make these prefixes useful through repeated action-conditioned predictions and goal comparisons during planning.

Standard anti-collapse objectives do not impose this ordering. SIGReg encourages an isotropic Gaussian embedding distribution, which treats all coordinates symmetrically. VICReg discourages low coordinate variance and cross-coordinate covariance \citep{bardes2021vicreg}. These objectives encourage variation across the representation, but do not assign greater predictive importance to earlier coordinates. One alternative is to reduce the embedding width, but empirically we noticed that this can degrade planning performance. A possible explanation is a tension between representation and prediction: narrow embeddings may restrict the representation of useful scene information, while wider embeddings may distribute predictive information across many coordinates, potentially complicating the dynamics that the predictor must learn. We therefore seek to retain a wide embedding during learning and train a compact prefix to predict the full next embedding, encouraging early coordinates to capture information useful for prediction and planning.

%On alternative is reducing the embedding width; however, it imposes a hard bottleneck on representation learning which can entangle relevant with non-relevant predictive information in the scene. We instead seek to retain a wide embedding while concentrating useful predictive information in its early coordinates. This separates representation width from planning capacity: a wide embedding preserves flexibility during learning, while the prediction objective encourages a compact prefix to retain information useful for predicting the full next embedding.

Here, we introduce \emph{Adaptive LeWorldModel} (\MN{}), a novel method that learns how much of a wide embedding to retain for prediction and planning. Using nested dropout as inspiration \citep{pmlr-v32-rippel14}, we define a capacity network that predicts a distribution over prefix lengths for each observation sequence. During training, the predictor receives a sampled prefix of each history embedding and must predict the full next embedding. This gives the early coordinates a clear role: retain information that helps predict the wider representation even when only a short prefix is available. To accommodate this unequal use of coordinates, we introduce \emph{MixSIGReg}, which regularizes masked embeddings against a prior-weighted mixture of Gaussian prefixes and zero suffixes. At deployment, the capacity network selects a prefix length from the initial and goal observations, fixing it throughout the episode. Planning then uses this prefix for recursive prediction and goal comparison.

\begin{figure}[!t]
  \centering
  \begin{minipage}[c]{0.52\linewidth}
    \centering
    \includegraphics[width=\linewidth]{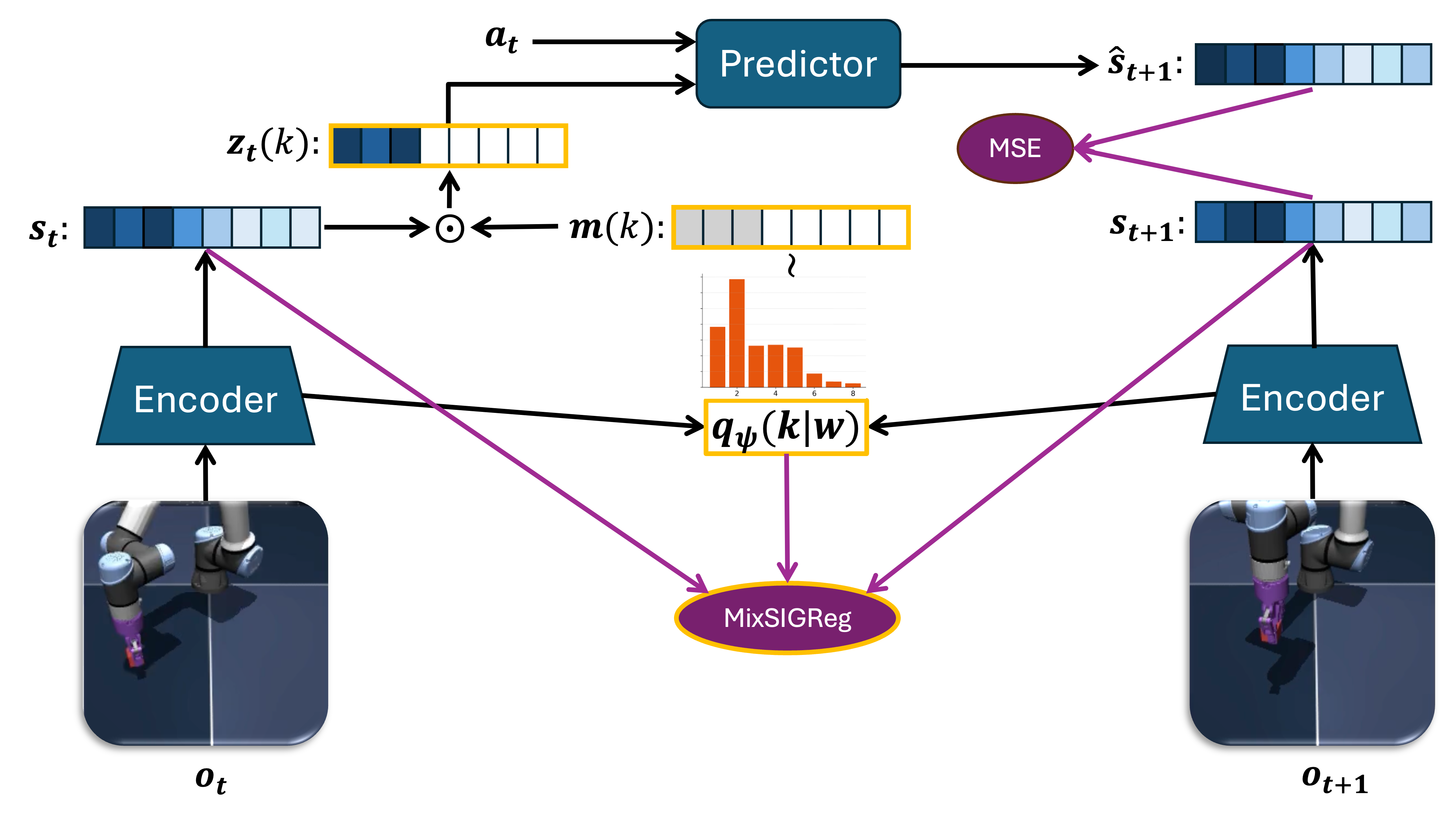}
  \end{minipage}\hfill
  \begin{minipage}[c]{0.46\linewidth}
    \centering
    \includegraphics[width=\linewidth]{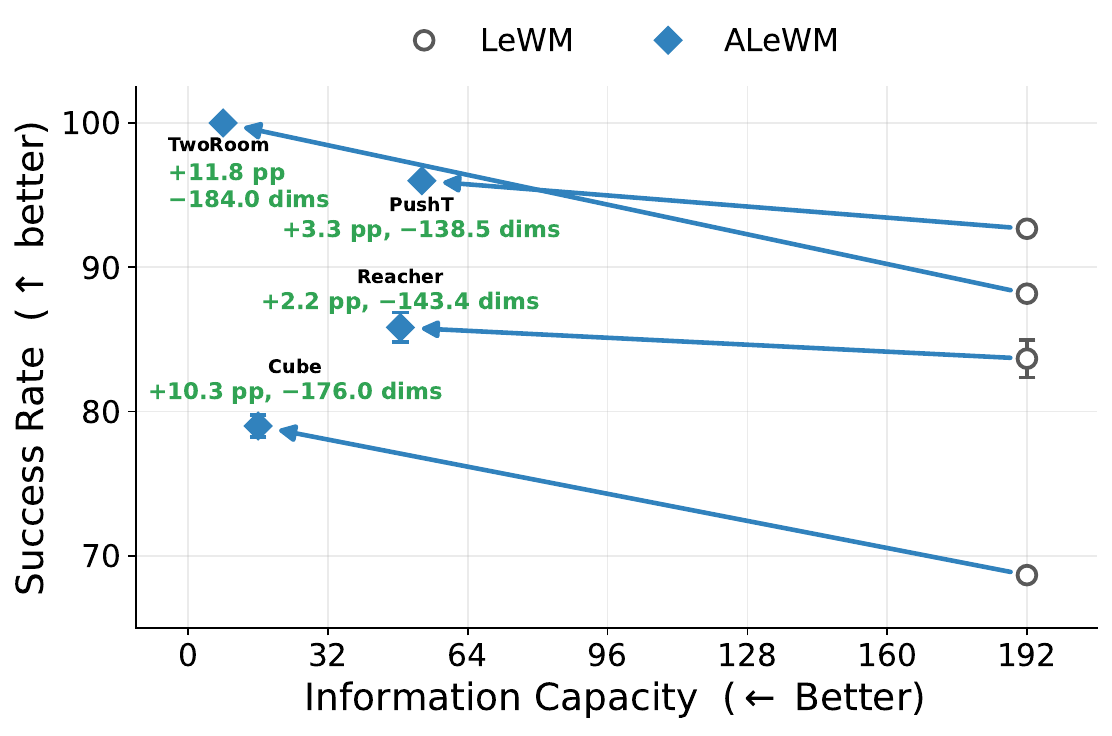}
  \end{minipage}
  \caption{\MN{} overview. \textbf{Left:} A shared encoder maps consecutive observations to latent states, and a learned network selects the active prefix capacity $k$ for masking the state $\rvs_t$. The predictor estimates the full next latent state from the masked embedding and action, while the prediction loss and MixSIGReg objective jointly train the representation. In \textcolor{orange}{orange}, our modifications to the LeWM framework. \textbf{Right:} 
  Gains of \MN{} over fixed-width LeWM using ViT-Tiny models with latent dimension $192$. Moving left and up indicates lower average capacity and higher test success rate.}
  \label{fig:method-overview}
\end{figure}

We analyze \MN{} to explain why early coordinates are favored. The MixSIGReg reference assigns lower variance to later coordinate blocks, reflecting how often they remain active. Our prediction analysis further shows that, under stated assumptions, placing the most useful information first minimizes expected prediction error. Empirically, we study \MN{} in a controlled dynamical system and four common simulated visual control tasks. In the controlled system, an eight-dimensional \MN{} concentrates linearly decodable information about four state variables in its first four coordinates. On the visual control tasks, selected \MN{} models achieve higher mean success rates than tuned fixed-width LeWM, with gains of up to approximately $7$ percentage points (PP) in success rate and an average reduction of $56 \%$ in planning capacity.

To summarize, we make the following novel contributions: (1) We introduce \MN{} with MixSIGReg to concentrate predictive information in compact prefixes while retaining a wide latent representation in JEPA-based architectures; (2) We characterize the mixture target’s decreasing variance across coordinate blocks and establish conditions for ordering predictive information by its contribution to reducing prediction error; (3) We demonstrate concentration of linearly decodable state information in a controlled dynamical system and improved planning success with substantially lower planning capacity on average in goal-conditioned visual control.

\section{Related Work}

\textbf{Joint-embedding world models.}
Joint-embedding predictive architectures (JEPAs) learn representations by predicting related inputs in embedding space rather than reconstructing observations \citep{lecun2022path}. I-JEPA applies this principle to masked image prediction \citep{assran2023ijepa}, while V-JEPA extends feature prediction to video \citep{bardes2024vjepa}. For planning, DINO-WM and V-JEPA~2-AC learn action-conditioned dynamics over pretrained visual features \citep{zhou2024dinowm,assran2025vjepa2}, whereas PLDM, LeWM and Sensorimotor World Models (SMWM) jointly learn visual representations and dynamics from reward-free data
\citep{sobal2025pldm,maes2026leworldmodel,ivashkov2026sensorimotor}. However, predictability alone does not ensure relevance for control: representations may retain persistent distractors \citep{sobal2022slowfeatures}, and low prediction error on the data distribution need not control error on states reached by a planner \citep{you2026controltheory}. Latent-recovery analyses also rely on specific assumptions, including dimension matching and isotropic Gaussian Ornstein--Uhlenbeck dynamics for population LeJEPA \citep{klindt2026when}, or controlled linear-Gaussian dynamics, invertible observations, spectral separation and sufficient conditional action variation \citep{zhang2026identifiabilitycontrolled}. Such results do not establish semantic coordinate ordering for learned prefixes. We build on LeWM's end-to-end setting and LeJEPA's SIGReg machinery \citep{balestriero2025lejepa} to study representation capacity and geometry.

\textbf{Regularizing predictive embeddings.}
Anti-collapse regularizers differ in the properties they impose on the embedding distribution.  Barlow Twins reduces cross-coordinate redundancy \citep{zbontar2021barlow}, W-MSE whitens features \citep{ermolov2021whitening}, and VICReg combines invariance with a coordinate-wise variance floor and an off-diagonal covariance penalty \citep{bardes2021vicreg}. SIGReg matches random projections to a standard Gaussian
\citep{balestriero2025lejepa,zimmermann2025kerjepa}, while VISReg separates scale control from sliced-Wasserstein distribution-shape matching \citep{wu2026visreg}. Other approaches impose subspace constraints or sparse targets: Sub-JEPA regularizes fixed random subspaces \citep{zhao2026subjepa}, and Rectified LpJEPA introduces sparse mixed targets \citep{kuang2026rectifiedlpjepa}. The concurrent study LpWM uses a rectified generalized-Gaussian target to learn non-negative sparse predictive representations \citep{kuang2026lpwm}. LpWM sparsifies a fixed-width code, unlike our construction, which defines nested prefixes.
% For ordered representations, MIC adds prefix--residual correlation, variance and uniformity
% penalties to Matryoshka training \citep{nguyen2026mic}.  Our target has signed Gaussian coordinates in each active prefix and zeros in its suffix, mixed according to a fixed capacity prior.  We match the learned masked aggregate to this target while jointly learning a sequence-conditioned prefix selector.

\textbf{Ordered representations.} Learning ordered representations has been studied in several contexts, including visual tokenization \citep{bachmann2025flextok, yan2025elastictok, yang2025vaportok,lu2026adatok} and representation learning, as depicted next. Nested dropout trains only a sampled prefix, exposing earlier coordinates more often \citep{pmlr-v32-rippel14}. Matryoshka Representation Learning and Information-Ordered Bottlenecks similarly train representations to remain useful across prescribed prefix widths \citep{kusupati2022matryoshka,ho2025iob}. MIC adds prefix--residual correlation, variance and uniformity penalties to Matryoshka training \citep{nguyen2026mic}. Learning the retained capacity also has precedents. Variational Nested Dropout learns input-conditioned ordered masks \citep{cui2023variationalnesteddropout}, while SparC-IB learns input-dependent categorical prefix lengths for supervised prediction \citep{samaddar2023sparcib}. Both use decoder networks to train and organize the latent representation in non-temporal tasks: one reconstructs the input, and the other predicts a label. We instead learn a decoder-free sequence-conditioned distribution over state-coordinate prefix lengths through action-conditioned prediction of future embeddings, and match the aggregate distribution of masked embeddings using characteristic functions.
%Learning the retained capacity also has precedents network: Variational Nested Dropout learns input-conditioned ordered masks through a Bayesian relaxation \citep{cui2023variationalnesteddropout} and SparC-IB learns input-dependent categorical prefix lengths with Gaussian--Dirac support \citep{samaddar2023sparcib}.  It uses per-input active-prefix Kullback--Leibler (KL) divergence and an explicit capacity KL for supervised prediction. Both methods use a decoder which helps organize the latent embedding in a non-dynamical scenarios, while we instead learn a sequence-conditioned categorical prefix over state coordinates through action-conditioned temporal prediction and use characteristic-function matching for the masked aggregate.
%In visual tokenization, FlexTok uses nested dropout to keep ordered prefixes useful across reconstruction and generation budgets \citep{bachmann2025flextok}.
% ElasticTok adapts token allocation for images and videos in both continuous and discrete models, evaluating a learned length predictor alongside
% reconstruction-based search \citep{yan2025elastictok}.  VaporTok and AdaTok learn categorical prefix cutoffs with sampled training and modal inference
% \citep{yang2025vaportok,lu2026adatok}.  These methods vary token counts for visual reconstruction or generation; our prefixes select state coordinates for
% predictive dynamics and planning. 
Lastly, in work concurrent with this study, \citet{liu2026orderedactiontokens} proposed using nested dropout for action-token ordering and reconstruction in vision-language-action models for robot control.

% Nested dropout connects fixed-distribution prefix training to PCA in a linear
% autoencoder, with orthonormality and spectral conditions needed to resolve
% axes up to sign \citep{pmlr-v32-rippel14}.  Matryoshka Representation Learning
% makes selected prefixes useful under different resource budgets
% \citep{kusupati2022matryoshka}.  Variational Nested Dropout already includes
% learned, input-conditioned masks and Gumbel--Softmax relaxation
% \citep{cui2023variationalnesteddropout}.  FlexTok and ElasticTok extend
% variable-length representations to visual tokenization
% \citep{bachmann2025flextok,yan2025elastictok}.
% We extend established nested-capacity ingredients to an action-conditioned
% JEPA, matching its anti-collapse target to the masked aggregate and reusing
% the selected prefix recursively during planning.

% \section{Background}
% Scalars are lower-case, vectors bold lower-case, and matrices bold upper-case.
% The maximum embedding dimension is $d$.  LeWM encodes observations as
% $\rvs_t=\enc(\rvo_t)$ and predicts the next embedding using an
%  action-conditioned history of embeddings.  Training combines squared
% prediction error with SIGReg, which compares empirical characteristic
% functions along random projections with a Gaussian reference.  Planning uses
% CEM to optimize actions through latent rollouts and a terminal goal distance.
% Appendix~\ref{sec:background-details} gives the full equations and sampling
% conventions.  We change the predictor's input state and the SIGReg reference;
% the full next embedding remains the prediction target.

\section{Method}
\label{sec:method}
%Scalars are lower-case (e.g., $x$), vectors are bold lower-case (e.g., $\rvx$), and matrices are bold upper-case (e.g., $\rmX$).  %Throughout, $d$
%is the maximum embedding dimension, $\rvs_t\in\sR^d$ denotes a full encoder
%embedding, and $\rvz_t\in\sR^d$ denotes its capacity-masked version. 
%In what follows, we describe \MN{}'s objective and planning, and provide theoretical justification for our proposed approach.
To motivate our method, we begin with the LeWM formulation \citep{maes2026leworldmodel}, which combines two objectives: (1) a prediction loss between the predicted next-step embedding and the encoder's embedding of the corresponding observation; and (2) SIGReg regularization
\citep{balestriero2025lejepa}, which encourages the aggregate embedding distribution to match an isotropic standard Gaussian (see Appendix~\ref{sec:background-details} for a full description).
These objectives do not impose a coordinate ordering, allowing predictive information to be distributed throughout the latent representation. Empirically, we observed that increasing the embedding dimension can often reduce planning performance. One possible explanation is that predicting a larger latent can make accurate prediction more difficult. Since during rollout the learned predictor is applied recursively, prediction errors can propagate and affect the planner's evaluation of candidate actions. Reducing the embedding dimension can mitigate this issue, but requires careful tuning and restricts the expressivity of the latent representation.

Hence, we seek to retain the expressivity of a wide embedding while encouraging the information needed for prediction to concentrate in compact prefixes, resulting in reduced difficulty of prediction. Inspired by nested dropout \citep{pmlr-v32-rippel14}, which was proposed in the context of autoencoders, we learn a sequence-conditioned distribution over a finite set of admissible prefix capacities. During training, the predictor receives a sampled prefix of the latent embedding vectors and predicts the full next embedding, encouraging early coordinates to retain information useful across multiple capacities. The prefix size is sampled from a distribution produced by a capacity network which is jointly \textit{learned} with the rest of the network parameters, namely, the encoder and the predictor. At deployment, the capacity network selects a fixed prefix size for each episode based on the known initial and goal frames. Our construction displays several favorable properties: (1) As the capacity network is learned during training, the prefix size can be set automatically without an exhaustive search after training; (2) different episodes can use different capacities without training separate models; and (3) alternative prefix sizes can be examined for each episode after training. A graphical description of our method is presented in Figure~\ref{fig:method-overview}. We describe our construction in detail next.

\subsection{Controlling the Information Capacity}
In what follows, scalars are denoted in lower-case (e.g., $x$), vectors in bold lower-case (e.g., $\rvx$), and matrices in bold upper-case (e.g., $\rmX$).
Assume we are given a training sequence of $F$ observations $\rvw = (\rvo_1,\ldots,\rvo_F)$, and a set of associated actions $(\rva_1, \ldots, \rva_{F-1})$. We define the set of admissible capacities as an ordered set $\mathcal{K}=\{k_1<\cdots<k_C\}\subseteq\{1,\ldots,d\}$ with $k_C=d$, the maximal latent embedding dimension, and $k_c \in {\sN} \setminus \{0\} ~~ \forall c \in \{1, \ldots, C\}$. For instance, if we fix $C=d$, then $\{k_1=1< k_2=2 < \cdots<k_C=d\}$. Alternatively, if we fix $C=d/2$, a possible capacity set is the set of even numbers $\{k_1=2< k_2=4 < \cdots<k_C=d\}$, assuming that $d$ is even. 
In our experiments, the capacity set is fixed for each $d$ (Appendix \ref{sec:implementation_details} specifies the choices per experiment).

Given $k\in\mathcal K$, define the prefix mask by $\rvm(k)$ with coordinate values $[\rvm(k)]^j = \mathds{1}[j\le k], ~~ \forall j \in \{1,\ldots,d\}$. The latent embedding $\rvs_t \in \sR^d$ and the masked-latent embedding $\rvz_t(k) \in \sR^d$ for a time-step $t$ are obtained according to:
\begin{equation}
    \rvs_t=\enc(\rvo_t),
    \qquad
    \rvz_t(k)=\rvm(k)\odot\rvs_t=\rmM_k\rvs_t,
    \qquad
    z_t^j(k)=
    \begin{cases}
        s_t^j, & j\leq k,\\
        0, & j>k,
    \end{cases}
  \label{eq:masked-latent}
\end{equation}
%where $\rmM_k=[\rvm(k_1), ..., \rvm(k_C)] \in \{0,1\}^{C \times d}$ is the matrix formed by stacking the mask vectors in columns, 
where $\rmM_k=\diag(\rvm(k)) \in \{0,1\}^{d \times d}$ and $\enc(\cdot)$ is an encoder network. Next, we define a \textit{learned} capacity selector network $q_\psi(k \mid \rvw)$ and a \textit{fixed} capacity prior distribution $\pi_0(k; \alpha)$ whose role will become clear in the MixSIGReg regularizer. Both distributions lie in $\Delta^{C-1}$. We construct the capacity prior from polynomial weights over reverse capacity ranks. For degree $\alpha\in\sR$, treated as a hyperparameter, define
\begin{equation}
  w_\alpha(k_c)=(C-c+1)^\alpha,
  \qquad
  \pi_0(k_c;\alpha)
  =
  \frac{w_\alpha(k_c)}{\sum_{\ell=1}^{C}w_\alpha(k_\ell)}.
  \label{eq:prior-family-cat}
\end{equation}
Thus, $\alpha>0$ favors smaller capacities, $\alpha=0$ gives a uniform prior, and $\alpha<0$ favors larger capacities.  In this study, our construction depends only on the ordering of the capacity support, not on the numerical spacing between its
elements. We leave for future work further investigations of other weighting schemes for the capacities. To learn the capacity-network parameters, we initialize the network such that its output is equal to prior probabilities $\boldsymbol{\pi}_0(\cdot, \alpha)$ and use the straight-through Gumbel--Softmax estimator: we draw $k\sim q_\psi(\cdot\mid\rvw)$, use the corresponding hard prefix in the forward pass, and differentiate through its soft relaxation \citep{jang2017gumbelsoftmax, maddison2017the} (full construction in
Appendix~\ref{sec:st_estimator}). We intentionally condition the capacity on a sequence of observations to constrain the capacity to be the same for the entire sequence while allowing the model to choose different capacities for different sequences. Intuitively, for the same sequence we expect that the same information capacity will store all the relevant details, yet it may vary between sequences.

\subsection{Training Objective and Planning} As in LeWM, our objective consists of prediction loss for the next state given a window of previous states, and a regularization term that prevents embedding collapse. For clarity, we present next-step prediction; the objective extends naturally to sequences of frames and actions with frame skips.

\paragraph{Prediction loss.} Let $\rvs_t=\enc(\rvo_t)$ and $\rvs_{t+1}=\enc(\rvo_{t+1})$ be the encoding of the current and next states, respectively. In addition, let $\rva_{t}$ be the current (ground-truth) action and let $k\sim q_\psi(\cdot\mid\rvw)$ be the sampled capacity with its corresponding mask $\rmM_k$. The model predicts the next state as follows:
\begin{equation}
    \rvz_{t}(k)=\rmM_k\rvs_{t},
    \qquad
    \hat{\rvs}_{t+1} = \pred(\rvz_{t}(k), \rva_t), 
    \label{eq:prediction}
\end{equation}
where $\pred(\cdot, \cdot)$ is a predictor network. We define the prediction target to be the full next encoder embedding $\rvs_{t+1}$. Thus, the prediction loss is given by 
\begin{equation}
    \mathcal{L}_{\mathrm{pred}}
    =
    \frac{1}{B(F-1)}
    \sum_{b=1}^B \sum_{t=1}^{F-1}|| \hat{\rvs}_{b, t+1} - \rvs_{b, t+1}||_2^2 \,
    \label{eq:prior-marginal-prediction-loss}
\end{equation}
where $B$ is the batch size, and the subscript $b$ denotes the index of an example in a batch.
Note that in the objective both the input and target embeddings are learned jointly without relying on a fixed external teacher. The intuition behind predicting the full next embedding is to encourage the prefix to retain information useful for predicting the wider representation, while preserving flexibility during representation learning. Since earlier coordinates are retained at least as often as later ones, the model is encouraged to concentrate predictive information near the beginning of the embedding.

\paragraph{MixSIGReg.}
Applying the prediction loss alone can lead to representation collapse. Common regularization schemes like 
SIGReg \citep{balestriero2025lejepa, maes2026leworldmodel}
and VICReg
\citep{bardes2021vicreg} encourage non-degenerate variance uniformly across all coordinates. Applying this constraint to the masked embedding may not be appropriate in our case, as we require later coordinates to be active less often and to be zero when masked. 
Indeed, in Section~\ref{sec:analysis}, we empirically observe that the SIGReg loss does not enforce a small prefix size for the masked embedding. To account for this, we define a reference distribution for each capacity $k$, with a standard Gaussian on the active coordinates and zeros on the remaining coordinates: $
p_0(\rvz_t \mid k)
=
\gN(\rvz_t^{1:k};0,\rmI_k)
\otimes
\delta_0(\rvz_t^{k+1:d})$, 
where $\delta_0$ is the Dirac distribution centered at zero, $\rvz_t^{1:k}$ denotes the first $k$ coordinates of $\rvz_t$,
and $\rvz_t^{k+1:d}$ denotes its remaining $d-k$ coordinates.

Let $\pi_0(k;\alpha)$ be the polynomial capacity prior in
Eq.~\ref{eq:prior-family-cat}. The marginal reference distribution of $\rvz_t$ takes the form of a mixture of Gaussian distributions:
\begin{equation}
    p_0(\rvz_t)
    =
    \sum_{k\in\mathcal{K}}\pi_0(k;\alpha)
    \left[
        \gN(\rvz_t^{1:k};0,\rmI_k)
        \otimes
        \delta_{0}(\rvz_t^{k+1:d})
    \right].
    \label{eq:prior-marginal-target}
\end{equation}

Having defined the reference distribution, we follow the SIGReg recipe by projecting embeddings onto unit-norm directions and minimizing a univariate Epps–Pulley \citep{epps1983normality} test statistic adapted to this reference distribution. Specifically, we draw $P$ projection vectors $\{\rvu^{(p)}\}_{p=1}^P$, with each $ \rvu^{(p)}\in \sS^{d-1}$. Under this reference law, $\rvu^{(p)\top}\rvz_t\mid K=k
    \sim
    \gN\!\left(
        0,
        \rvu^{(p)\top}\rmM_k\rvu^{(p)}
    \right)$.
Therefore, for a particular direction $p$ the target characteristic function at an evaluation point $\tau$ is:
\begin{equation}
    \phi_0^{(p)}(\tau)
    =
    \sum_{k\in\mathcal{K}}\pi_0(k;\alpha)
    \exp\!\left(
        -\frac{\tau^2}{2}
        \left\|\rmM_k\rvu^{(p)}\right\|_2^2
    \right).
    \label{eq:prior-marginal-target-cf}
\end{equation}

Likewise, we estimate the characteristic function at frame position $t$ by marginalizing over prefix capacities using the learned capacity distribution $q_\psi(k\mid\rvw_b)$ and applying a Monte Carlo average over batch observations:
\begin{equation}
    \hat{\phi}_t^{(p)}(\tau)
    =
    \frac{1}{B}
    \sum_{b=1}^{B}\sum_{k\in\mathcal{K}}
    q_\psi(k\mid\rvw_b)
    \exp\!\left(
        i\tau\rvu^{(p)\top}\rmM_k\rvs_{b,t}
    \right).
    \label{eq:prior-marginal-empirical-cf}
\end{equation}
The resulting loss, MixSIGReg, is obtained by averaging over projections and frame positions:
\begin{equation}
    \mathcal{L}_{\mathrm{MixSIG}}
    =
    \frac{B}{PF}
    \sum_{p=1}^{P}\sum_{t=1}^{F}
    \int
    w(\tau)
    \left|
        \hat{\phi}_t^{(p)}(\tau)-\phi_0^{(p)}(\tau)
    \right|^2
    d\tau.
    \label{eq:prior-marginal-sigreg-loss}
\end{equation}
Here $w(\tau)=\exp(-\tau^2/2)$ is the Gaussian frequency window and the integral is approximated using the quadrature specified in Appendix~\ref{sec:main-results-implementation}. Overall, MixSIGReg combines collapse prevention with a preference for compact representations. Its reference distribution assigns more frequent use to earlier coordinates while requiring active prefixes to retain variation.

To summarize, the complete training objective of ALeWM is:
\begin{equation}
    \mathcal{L}
        =
        \mathcal{L}_{\mathrm{pred}}
        +
        \lambda \mathcal{L}_{\mathrm{MixSIG}}
    \label{eq:prior-marginal-complete-objective}
\end{equation}

\paragraph{Fixed-prefix rollout.} As in LeWM, \MN{} performs planning using model predictive control (MPC). Unlike LeWM,
however, a single model supports planning with multiple prefix capacities. Given an episode's initial observation $\rvo_1$ and goal observation $\rvo_g$, we select the modal supported capacity $k_r=\argmax_{k\in\mathcal K}q_\psi(k\mid\rvw)$ where $\rvw=(\rvo_1,\rvo_g)$, and hold it fixed throughout the episode. Candidate action sequences are optimized so that the predicted terminal latent state at planning horizon $H$ approaches the goal embedding in the active-prefix space. Specifically, the same prefix mask $\rmM_{k_r}$ is used across all recursive rollout steps and replans, as well as in the terminal goal cost. Other deployment rules are possible, such as sampling a capacity from $q_\psi$ or evaluating multiple supported capacities per episode according to a specified
selection criterion, yet we did not pursue these directions in this study. Concretely, 
\begin{equation}
    \hat{\rvz}_{1}=\rmM_{k_r}\enc(\rvo_{1}),
    \qquad
    \hat{\rvz}_{t+1}
    =
    \rmM_{k_r}\pred(\hat{\rvz}_{t},\rva_{t}).
    \label{eq:capacity-conditioned-rollout}
\end{equation}
Let $\rvs_{g}$ denote the full encoder embedding of the goal
observation. Candidate plans are scored only on the active target dimensions:
\begin{equation}
    \mathcal{C}(\rva_{b,1:H-1};k_r)
    =
    \frac{1}{k_r}
    \left\|
        \hat{\rvz}_{H}-\rmM_{k_r}\rvs_g
    \right\|_2^2.
    \label{eq:active-prefix-goal-cost}
\end{equation}
The optimal candidate plan is then selected based on 
$\argmin_{\rva_{1:H-1}}\mathcal{C}(\rva_{1:H-1};k_r)$ using the Cross-Entropy Method (CEM) \citep{rubinstein2004cross}.

\subsection{Theoretical Motivation For Ordered Representation}
\label{sec:theory}
Our analysis separates two effects.  The MixSIGReg reference prescribes an ordered variance profile for masked embeddings, while nested prediction rewards information that remains useful across multiple prefix sizes.

\begin{informalproposition}[Ordered variance and predictive value]
\label{prop:informal-ordered-predictive-value}
Let $K_\pi\sim\pi_0$ be independent of the data and define
$\rho_\pi^j=\Pr(K_\pi\ge j)$.
The MixSIGReg reference satisfies
\[
\Var(z_0^j)=\rho_\pi^j,
\qquad
\rho_\pi^1\ge\cdots\ge\rho_\pi^d.
\]
Exact population matching transfers this variance profile to the
learned masked embeddings.

For a fixed encoder with square-integrable targets, let $R_k$
be the minimum population squared error for predicting the full
next embedding from the first $k$ coordinates of the embedding
history and the actions. Nested information gives
$R_\ell\le R_k$ for $k\le\ell$.
Setting $k_0=0$ and
$\Delta_c=R_{k_{c-1}}-R_{k_c}\ge0$, we obtain
\[
\E_{K_\pi}[R_{K_\pi}]
=R_0-\sum_{c=1}^C\rho_\pi^{k_c}\Delta_c.
\]
Each predictive gain is therefore weighted by its block's
retention probability. If these gains are additive and unchanged
by reordering, placing larger gains earlier minimizes this risk.
\end{informalproposition}

Formal assumptions and proofs appear in Appendix~\ref{sec:theory-details}. We note that these results motivate the proposed ordering but do not guarantee that joint training achieves it.

\paragraph{Intuition.}
One can think of a prefix mask as a cutoff.  Coordinate $j$ appears only when the cutoff reaches $j$.  In the reference distribution it has unit variance when present and is zero otherwise, so its overall masked variance is simply its survival probability $\rho_\pi^j$.  Any cutoff that keeps a later coordinate also keeps all earlier ones; this makes the masked variances non-increasing. Nested prediction supplies the complementary pressure.  For a fixed representation, an ideal predictor with a longer prefix cannot have larger minimum squared error because it has all the information in every shorter prefix.  The decrease $\Delta_c$ measures the predictive value added by block
$c$.  Under a random cutoff, that value is available only when the block survives, so early predictive information is reused across more capacity choices.  To see the resulting ordering pressure, suppose two blocks carry fixed predictive gains.  Placing the larger gain first makes its reduction in error count for more sampled capacities while leaving it later makes it count only when a longer prefix is selected.  Exchanging a larger late gain with a smaller early one therefore lowers expected prediction error whenever the two positions have different survival probabilities.  The preference therefore is for larger reductions in risk, not for larger remaining risks, near the start.  Hence, if later blocks add little oracle predictive value, a short prefix is nearly as predictive as the full embedding.

% These statements describe an inductive bias, not a guarantee about joint
% training.  The implemented finite MixSIGReg loss only encourages population
% matching, the risks $R_k$ use a separate Bayes-optimal predictor at each prefix,
% and learned-selector loss changes can be negative for a shared finite model.
% The results therefore do not establish convergence, semantic factor
% identification, or recovery of an intrinsic state dimension.

% Exact population matching passes this pattern to the learned masked embedding.
% It does not guarantee the same ordering for the raw embedding.  When a
% coordinate is inactive, its raw encoder value is invisible to the masked
% matching constraint; variation in those inactive values can therefore change
% the raw variances and their order.  The prediction loss and shared model may
% still constrain these values, but the target-law equality does not order them.

% \paragraph{Operational value and its limits.}
% A packed prefix needs $K$ scalars and its squared goal distance uses $K$
% coordinate terms, compared with $d$ for the full embedding.  This motivates
% compact rollout buffers, stored goal/replay embeddings, and staged candidate
% comparison.  These uses require suitable packing or computation; the current
% implementation masks dense tensors and does not establish encoder or predictor
% speedups.  Moreover, prediction quality is task- and loss-dependent: a
% persistent distractor can be highly predictable.  Prefix control curves are
% therefore necessary evidence beyond the prediction objective.

\section{Experiments}
We evaluated \MN{} on a toy problem (section~\ref{sec:toy}) as well as on a diverse set of simulated 2D and 3D tasks (section~\ref{sec:main_results}).  Unless specified otherwise, we report the average performance and standard error of the mean (SEM) over $3$ random seeds. %on the test set using validation-based selection.
Full experimental details are given in Appendix~\ref{sec:implementation_details}.

\begin{figure}[t]
  \centering
  \begin{minipage}[t]{0.235\linewidth}
    \centering
    \includegraphics[width=\linewidth]{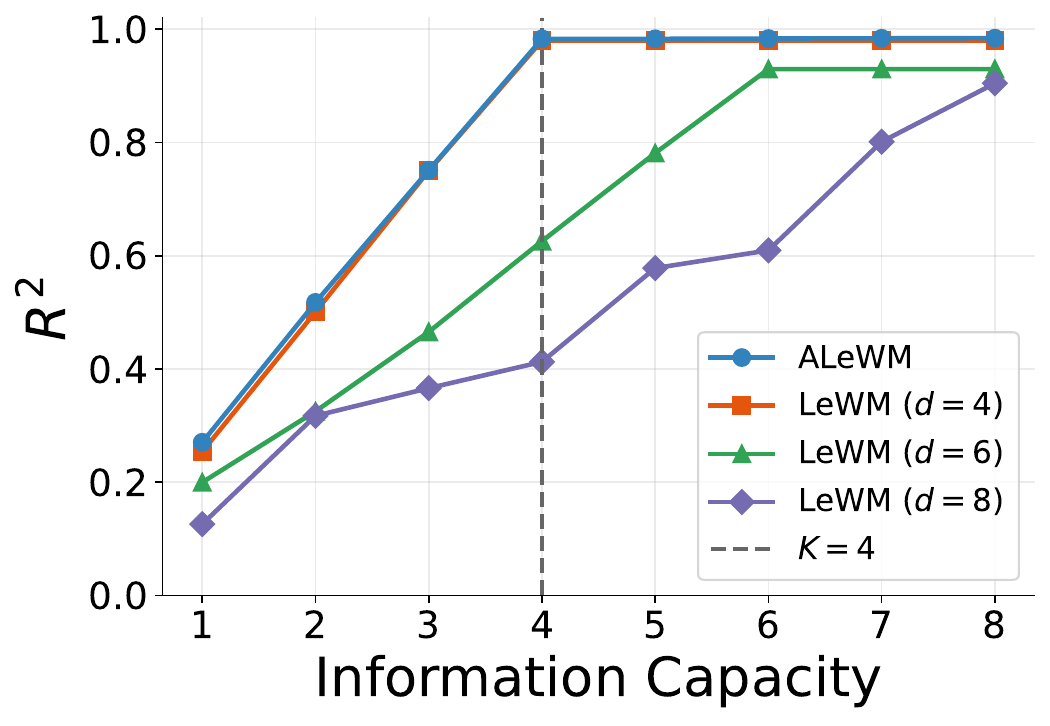}\\[-1mm]
    \small (a) Linear state recovery.
  \end{minipage}
  \hfill
  \begin{minipage}[t]{0.235\linewidth}
    \centering
    \includegraphics[width=\linewidth]{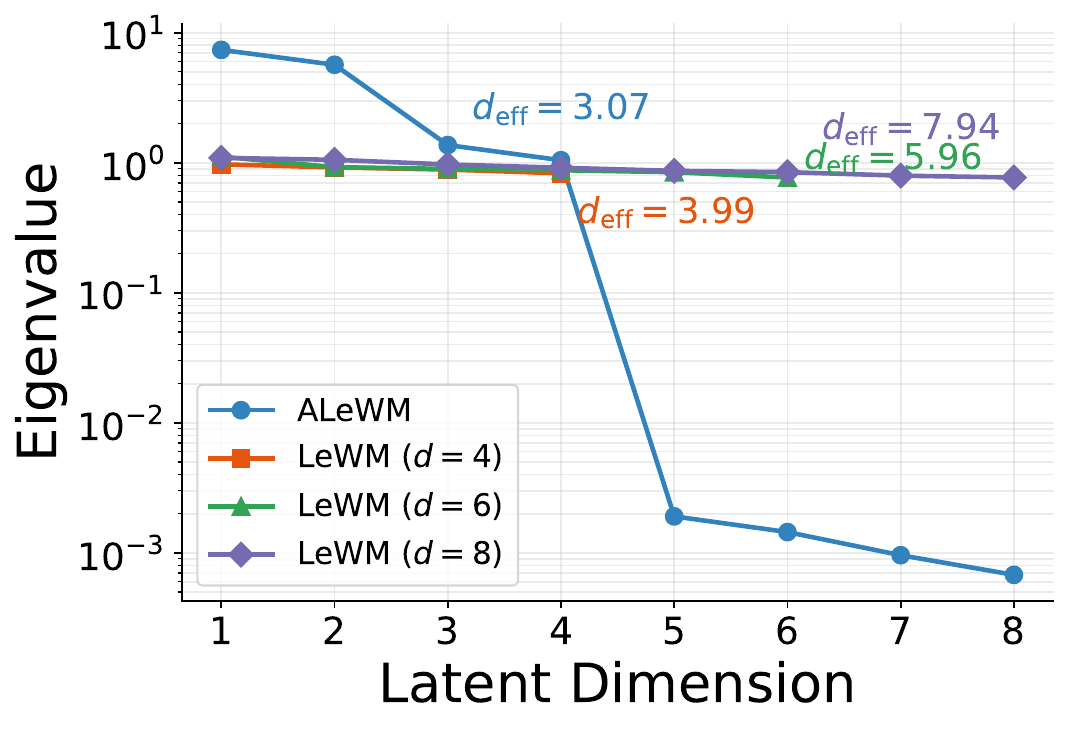}\\[-1mm]
    \small (b) Latent covariance spectrum.
  \end{minipage}
  \hfill
  \begin{minipage}[t]{0.235\linewidth}
    \centering
    \includegraphics[width=\linewidth]{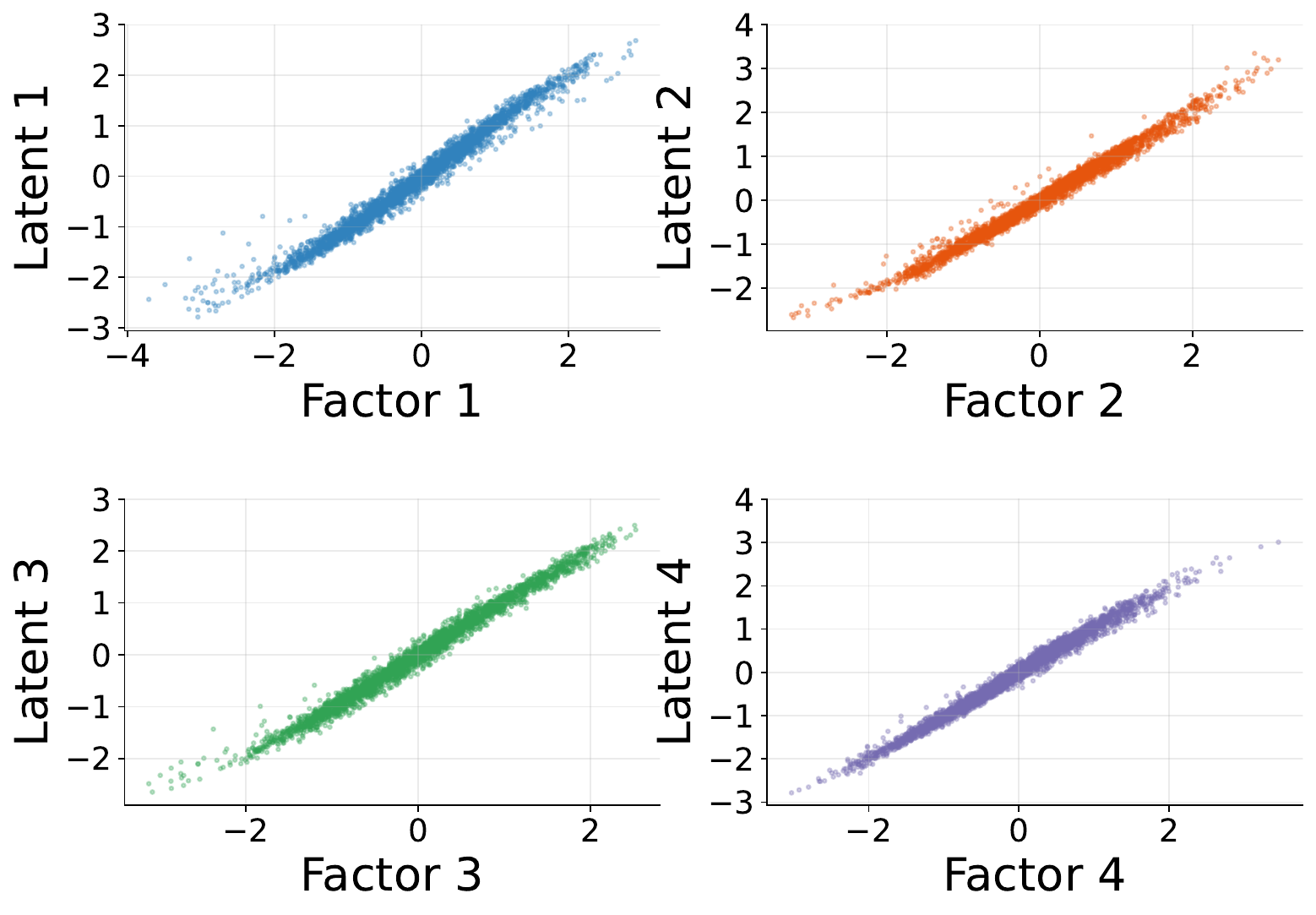}\\[-1mm]
    \small (c) Whitened Procrustes alignment.
  \end{minipage}
  \hfill
  \begin{minipage}[t]{0.235\linewidth}
    \centering
    \includegraphics[width=\linewidth]{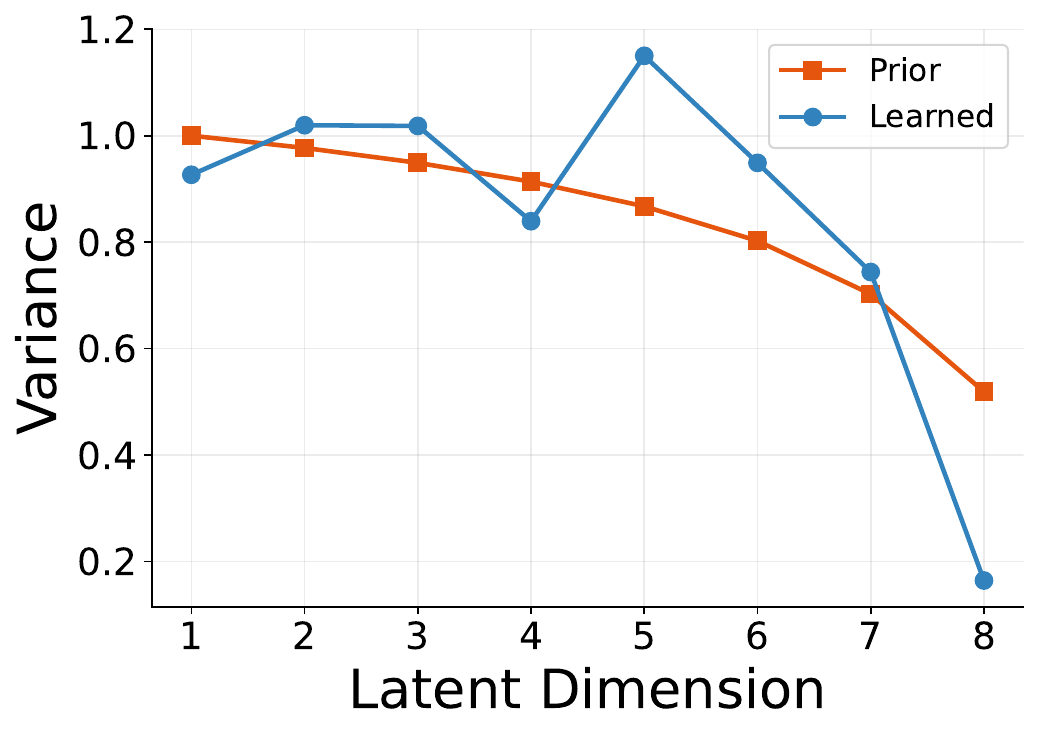}\\[-1mm]
    \small (d) Masked embedding variance.
  \end{minipage}
  \caption{
  Four-factor controlled damped oscillator diagnostics.  \textbf{(a)} Mean linear state-recovery $R^2$ as prefix size increases; the dashed line marks the true factor count $k=4$. For LeWM with $d \in \{4, 6\}$ the curves are continued horizontally beyond their trained dimension only as a visual reference. \textbf{(b)} Raw unmasked latent covariance spectra. \textbf{(c)} \MN{} whitened Procrustes alignment, fitted using training statistics. \textbf{(d)} \MN{} empirical masked-latent variance vs the target prior variance.}
  \label{fig:toy-oscillator}
\end{figure}

\subsection{Toy Example - controlled damped oscillators}
\label{sec:toy}
To gain insight into how \MN{} organizes dynamically relevant information across its latent coordinates, we study a controlled system with a known four-dimensional state. This setting allows us to test whether \MN{} can concentrate linearly decodable state information in an early prefix while retaining a wider latent representation. The system contains two independently controlled damped oscillators with state $\rvx_t=(p_t^1,v_t^1,p_t^2,v_t^2)$, corresponding to the oscillators’ positions and velocities. We record trajectories containing eight transitions. The actions $(a_t^1,a_t^2)$ specify the two oscillators’ equilibrium positions and are switched after four transitions. To form each observation, an invertible nonlinear map produces four strong observation coordinates, while six noisy nonlinear measurements mix all four factors along fixed random directions. Thus, each observation is ten-dimensional but contains only four dynamical factors: four coordinates are strongly related to the factors, while the remaining six provide weaker, noisier measurements of them. We train an eight-dimensional \MN{} with capacity support
$\mathcal{K} = \{1,\ldots,8\}$ and compare it with separately trained LeWM models with $d\in\{1, \ldots, 8\}$ latent dimensions. The results below are shown for one seed, yet they were consistent across several seeds.
Appendix ~\ref{sec:toy-implementation}  provides full experimental details.

% Figure~\ref{fig:toy-oscillator} presents several diagnostics of \MN{}, comparing them with LeWM (with latent dimension in \{4, 6, 8\}) where applicable on a held-out test set. For both methods, the reported results correspond to the hyperparameter-sweep configurations with the highest validation-set mean linear state-recovery \(R^2\). Figure~\ref{fig:toy-oscillator}(a) supports that when the latent dimension and the number of latent factors match LeWM linearly recovers the world’s latent variables \citp{klindt2026does}; however, when the latent dimension is larger it is harder for LeWM to achieve linear identifiably. We witnessed this pattern across myriad of weights for the SIGReg. Figure~\ref{fig:toy-oscillator}(b) further supports that finding showing that LeWM remains nearly isotropic with the effective ranks approximately equal to the number of latent dimensions for all dimensions. On the other hand, Figure~\ref{fig:toy-oscillator}(a)-(c) also show that despite the latent dimension is set to $d=8$, \MN{} can concentrate most of the information at prefix of $4$ with $R^2$ of $0.983$ (versus $0.980$ for LeWM with $d=4$), obtain anisotropic spectrum with $4$ non degenerate ranks, and obtain high alignment between the latent factors and the first four coordinates of the latent dimension. We stress that results may vary depending on the prior polynomial degree and $\lambda$. 

Figure~\ref{fig:toy-oscillator} compares latent-structure diagnostics for \MN{} and fixed-width LeWM models with $d\in\{4,6,8\}$ on the held-out test set. LeWM attains $R^2=0.980$ when its dimension matches the four-dimensional state, but recovery falls to $0.930$ and $0.904$ at $d=6$ and $d=8$, respectively (Figure~\ref{fig:toy-oscillator}(a)). This result is consistent with \citet{klindt2026when}, whose identifiability guarantee assumes matching representation and latent dimensions and leaves the mismatched regime open. Moreover, LeWM remains nearly isotropic at every width %, with effective ranks $3.99$, $5.96$, and $7.94$ 
(Figure~\ref{fig:toy-oscillator}(b)).
In contrast, Figure~\ref{fig:toy-oscillator}(a)--(c) show that \MN{} resolves this width--prefix tradeoff with non-degenerate prior variance for all capacities (Figure~\ref{fig:toy-oscillator}(d)). 
%compares the empirical variance of the sampled masked embedding with the prior survival profile, which is nonzero in all eight coordinates. These quantities would coincide under exact population matching on the evaluated distribution; the observed deviations can arise from finite test sampling, the finite training surrogate, and incomplete optimization. 
Although \MN{} was trained with maximum dimension $d=8$, its first four coordinates attain $R^2=0.983$, with minor improvement from the remaining coordinates. This matches the dimension-matched LeWM model while outperforming the selected wider models. \MN{} also exhibits four dominant covariance directions and close alignment between its four-coordinate prefix and the four factors after fitting an orthogonal alignment matrix. Together, these results show that \MN{} can concentrate linearly decodable state information in a compact prefix without restricting the ambient latent width. We note that these results support prefix concentration; however, coordinate-wise disentanglement or recovery of the true factor count is not guaranteed.

\subsection{Main Results}
\label{sec:main_results}
Next, we evaluate goal-conditioned control on the four continuous-action benchmarks used by LeWM \citep{maes2026leworldmodel}: TwoRoom, PushT, Reacher, and OGBench-Cube. For each dataset, we construct a new episode-level split with no overlap among the training, validation, and test episodes, thereby preventing data leakage. The validation and test sets contain $100$ and $200$ episodes, respectively, and the remaining episodes are used for training. Validation set is used for hyper-parameter selection in all experiments throughout. We follow the training and planning protocol of LeWM. In addition, we use a small transformer network for $q_\psi(k \mid \rvw)$ that supports variable-length sequences and is permutation-invariant over its input tokens. We consider ViT-Tiny encoders with maximum latent dimensions $d_{\max}\in \{96,192\}$ and a ViT-Small encoder with $d_{\max}=384$. Table~\ref{tab:main-control-results} reports the test success rates of \MN{} and two LeWM variants: one with the latent dimension set to $d_{\max}$ and one with the best latent dimension searched as a hyper-parameter. For \MN{}, we use a coarse capacity support for each $d_{\max}$; for example, when $d_{\max}=192$, the capacity support is $\mathcal{K}=\{8,16,32,64,96,128,160,192\}$. %The evaluated LeWM search grids are similar and specified in Appendix~\ref{sec:main-results-implementation}. 
This coarser support reduces the categorical search space and allows each capacity increment to activate a nontrivial block of coordinates, making capacity allocation easier to optimize. For each test episode, planning uses the capacity with the highest selector probability, denoted by $K_{\text{plan}}$. Table~\ref{tab:main-control-results} reports the average of this selected capacity over $200$ test episodes within each seed and then over the $3$ seeds. Full experimental details are 
in Appendix~\ref{sec:main-results-implementation}.

\begin{table}[!t]
  \caption{LeWM and \MN{} mean test success rate ($\pm$ SEM) and mean planning prefix $\E[K_{\text{plan}}]$.}
  \label{tab:main-control-results}
  \centering
  \setlength{\tabcolsep}{3pt}
  \scalebox{0.8}{%
  \resizebox{\linewidth}{!}{%
  \begin{tabular}{ll*{8}{r}}
    \toprule
    Backbone ($d_{\max}$) & Method
      & \multicolumn{2}{c}{TwoRoom}
      & \multicolumn{2}{c}{PushT}
      & \multicolumn{2}{c}{Reacher}
      & \multicolumn{2}{c}{OGBench-Cube} \\
      & & SR ($\uparrow$) & $\E[K_{\text{plan}}]$
      & SR ($\uparrow$) & $\E[K_{\text{plan}}]$
      & SR ($\uparrow$) & $\E[K_{\text{plan}}]$
      & SR ($\uparrow$) & $\E[K_{\text{plan}}]$ \\
    \midrule
    ViT-Tiny (96) & LeWM (best $d$)
      & $95.00\pm0.00$ & $32.00$
      & $89.33\pm0.44$ & $96.00$
      & $83.67\pm0.73$ & $64.00$
      & $72.33\pm1.17$ & $96.00$ \\
                  & LeWM ($d=96$)
      & $92.17\pm0.17$ & $96.00$
      & $89.33\pm0.44$ & $96.00$
      & $82.00\pm1.76$ & $96.00$
      & $72.33\pm1.17$ & $96.00$ \\
                  & \MN{}
      & $\mathbf{99.50\pm0.00}$ & $13.33$
      & $\mathbf{94.67\pm0.83}$ & $32.00$
      & $\mathbf{85.83\pm0.60}$ & $21.33$
      & $\mathbf{77.17\pm1.17}$ & $16.00$ \\
    \midrule
    ViT-Tiny (192) & LeWM (best $d$)
      & $95.00\pm0.00$ & $32.00$
      & $92.67\pm0.33$ & $160.0$
      & $84.50\pm1.15$ & $160.0$
      & $72.33\pm1.17$ & $96.00$ \\
                   & LeWM ($d=192$)
      & $88.17\pm0.33$ & $192.0$
      & $92.67\pm0.33$ & $192.0$
      & $83.67\pm1.30$ & $192.0$
      & $68.67\pm0.17$ & $192.0$ \\
                   & \MN{}
      & $\mathbf{100.0\pm0.00}$ & $8.00$
      & $\mathbf{96.00\pm0.00}$ & $53.49$
      & $\mathbf{85.83\pm1.01}$ & $48.59$
      & $\mathbf{79.00\pm0.76}$ & $16.00$ \\
    \midrule
    ViT-Small (384) & LeWM (best $d$)
      & $96.50\pm1.04$ & $32.00$
      & $94.17\pm1.01$ & $256.0$
      & $85.83\pm0.44$ & $64.00$
      & $70.50\pm0.58$ & $16.00$ \\
                    & LeWM ($d=384$)
      & $83.17\pm0.17$ & $384.0$
      & $92.83\pm0.33$ & $384.0$
      & $84.67\pm1.20$ & $384.0$
      & $69.33\pm1.20$ & $384.0$ \\
                    & \MN{}
      & $\mathbf{100.0\pm0.00}$ & $8.16$
      & $\mathbf{95.50\pm0.29}$ & $55.36$
      & $\mathbf{86.17\pm0.88}$ & $32.00$
      & $\mathbf{77.67\pm1.20}$ & $32.00$ \\
    \bottomrule
  \end{tabular}%
  }
  }
\end{table}

Across all evaluated configurations, \MN{} achieves the highest mean success rate on all four tasks. Compared with full-width LeWM, it improves mean success by approximately $1.5-17$ percentage points while reducing average planning capacity by approximately $67-98\%$. Tuning LeWM’s latent dimension narrows the performance gap, but \MN{} retains gains of approximately $0.3-7.2$ percentage points, with the largest improvement on OGBench-Cube, while using fewer planning dimensions in $11$ of the 12 comparisons. These results suggest that a small planning state is only part of the design: how predictive information is learned and organized also matters. A wide embedding gives the encoder room to represent the observations, while prediction of the full next embedding encourages useful information to concentrate in a short prefix. Although the main goal of this paper is to concentrate information in a compact representation of a wide embedding vector, we also report the selected capacities per experiment for \MN{} in Table~\ref{tab:main-planning-capacity-counts} in Appendix~\ref{sec:main-results-implementation}. The table shows that selected capacities tend to be similar within an experiment for all episodes although in some cases they can vary across episodes, most noticeably on Reacher. We believe that this is because the environments are highly controlled with low-variation between episodes. In Appendix~\ref{sec:mixed_training} we train \MN{} on mixed datasets and show that it learns different capacity distributions per dataset, albeit with performance degradation. In addition, in Appendix~\ref{sec:reacher_capacity} we analyze the capacity allocation in Reacher, showing that when the goal is further away from the starting position lower capacities are preferred. We also found that capacities can vary across seeds, but for each evaluated dataset and backbone, selections remain confined to neighboring supported values. This variation may arise from the coarse capacity grid and training stochasticity. %leading to non-consistent capacity selection. 

\subsection{Analysis \& Ablation Study}
\label{sec:analysis}

\textbf{Training with fixed capacity probabilities.}
In our study, the model learns the capacity allocation during training. A natural question is how a model trained with a fixed capacity distribution would perform. This approach is reminiscent of nested dropout \cite{pmlr-v32-rippel14} and Matryoshka learning \cite{kusupati2022matryoshka}. We refer to this approach as Nested Dropout. We compare the methods using ViT-Tiny with $d_{\max}=192$. Nested Dropout uses the same capacity support, optimization settings, and relevant hyper-parameters as \MN{}, but samples from a fixed polynomial capacity distribution during training and uses the SIGReg regularization on active prefix only. %applies both prediction loss and SIGReg only to the active prefix. 
After training, we evaluate planning separately at each capacity in $\mathcal K$ and report the mean success rate of the best configuration per capacity in Figure~\ref{fig:nested-dropout-comparison}. 
% . For each capacity, we report the configuration with the highest mean success rate across seeds and report its corresponding mean test success rate in Figure~\ref{fig:nested-dropout-comparison}.
%We compare it to the capacity allocation and success rate of \MN{}. 
On PushT, Nested Dropout success rate improves strongly with active dimension and peaks at $93.67\pm0.93\%$ using $96$ dimensions; \MN{} reaches $96.00\pm0.00\%$ success rate while using a shorter prefix of $53.49 \pm 10.51$. In OGBench-Cube, Nested Dropout results remain between $68.00\%$ and $70.00\%$ across the capacity range, while \MN{} reaches $79.00\pm0.76\%$ with active prefix of only $16.00$. Overall, this comparison favors the complete \MN{} training design over fixed-prior prefix training at compact planning capacities. Furthermore, Appendix~\ref{sec:cap_dynamics} presents the dynamics of capacity allocation during training showing non-trivial distribution shift as training progresses indicating on learning better capacity allocation.

%The figure places the selected $d_{\max}=192$ \MN{} models at their mean planning capacities, $\E[K_{\mathrm{plan}}]=53.49\pm10.51$ on PushT and $16.00\pm0.00$ on OGBench-Cube.

%\textbf{Closed-Loop experiments.}

\textbf{\MN{} capacity selection criteria.}
To isolate the role of capacity at test time, we re-evaluate $d_{\max}=192$ \MN{} models, %trained with the selected configurations in Table~\ref{tab:main-control-results}, 
overriding the selector with each of the $8$ supported fixed prefix sizes. No retraining is performed: the model, planning procedure, and active-prefix goal cost in Eq.~\ref{eq:active-prefix-goal-cost} are held fixed, and only the number of coordinates propagated through the rollout (including the goal state) is varied between experiments. We highlight that, like for Nested Dropout, this strategy requires evaluating each supported capacity separately to pick the best one based on the validation set, while the selection of \MN{} uses the selector network to choose a capacity per episode.  Figure~\ref{fig:rollout-prefix-sweep} compares the success rate of \MN{} with those obtained at fixed prefix sizes, averaged over three seeds. From the figure, no fixed prefix is uniformly best.
Moreover, while for OGBench-Cube \MN{}'s selection criterion matches the best fixed-prefix mean of $79.00\%$ at $k=16$, for PushT and Reacher the selected capacity has more variation. \MN{} achieves the success rate $96.00\%$ in PushT and the success rate $85.83\%$ in Reacher, matching or exceeding the success rates of $95.83\%$ in $k=96$ and $84.17\%$ in $k=128$ for those datasets, respectively, with smaller average prefix sizes. %Overall, across these evaluated configurations, the learned selector matches or exceeds the best mean success found by the eight-pass fixed-prefix sweep, while selecting smaller capacities on average.

\begin{figure}[!t]
  \centering
  \begin{minipage}[t]{0.32\linewidth}
    \vspace{0pt}\centering
    \captionsetup{font=footnotesize}
    \includegraphics[width=0.9\linewidth]{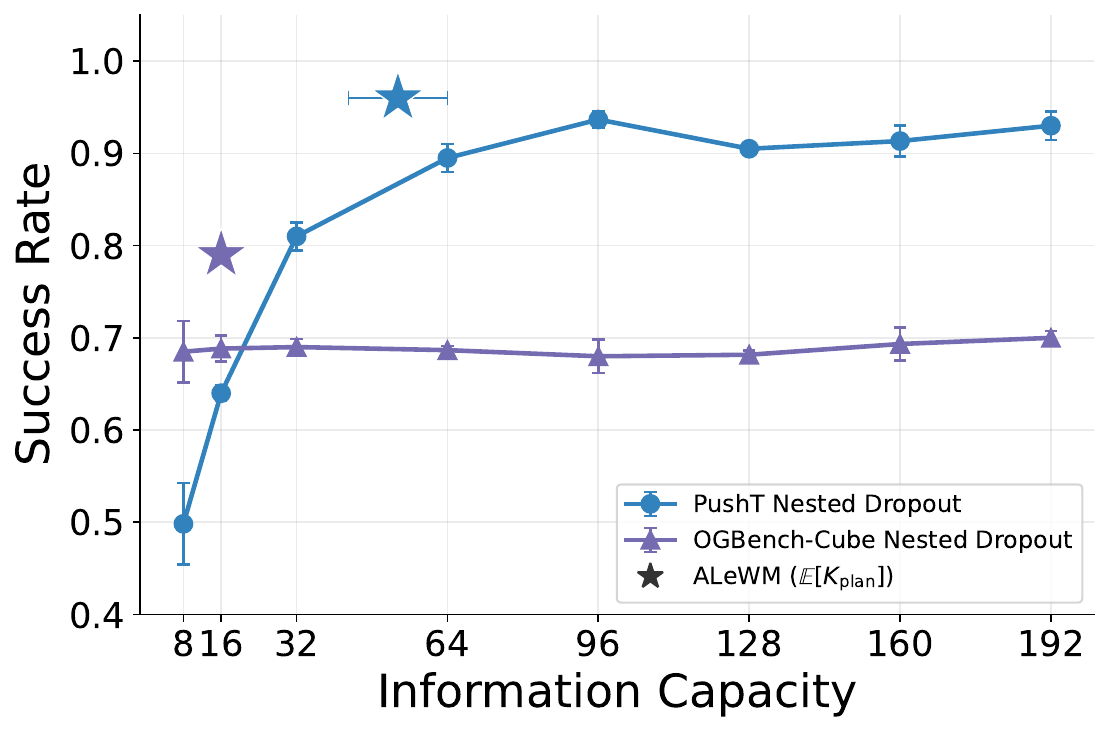}
    \captionof{figure}{Test success rate for Nested Dropout vs \MN{}.}
    \label{fig:nested-dropout-comparison}
  \end{minipage}\hfill
  \begin{minipage}[t]{0.32\linewidth}
    \vspace{0pt}\centering
    \captionsetup{font=footnotesize}
    \includegraphics[width=0.9\linewidth]{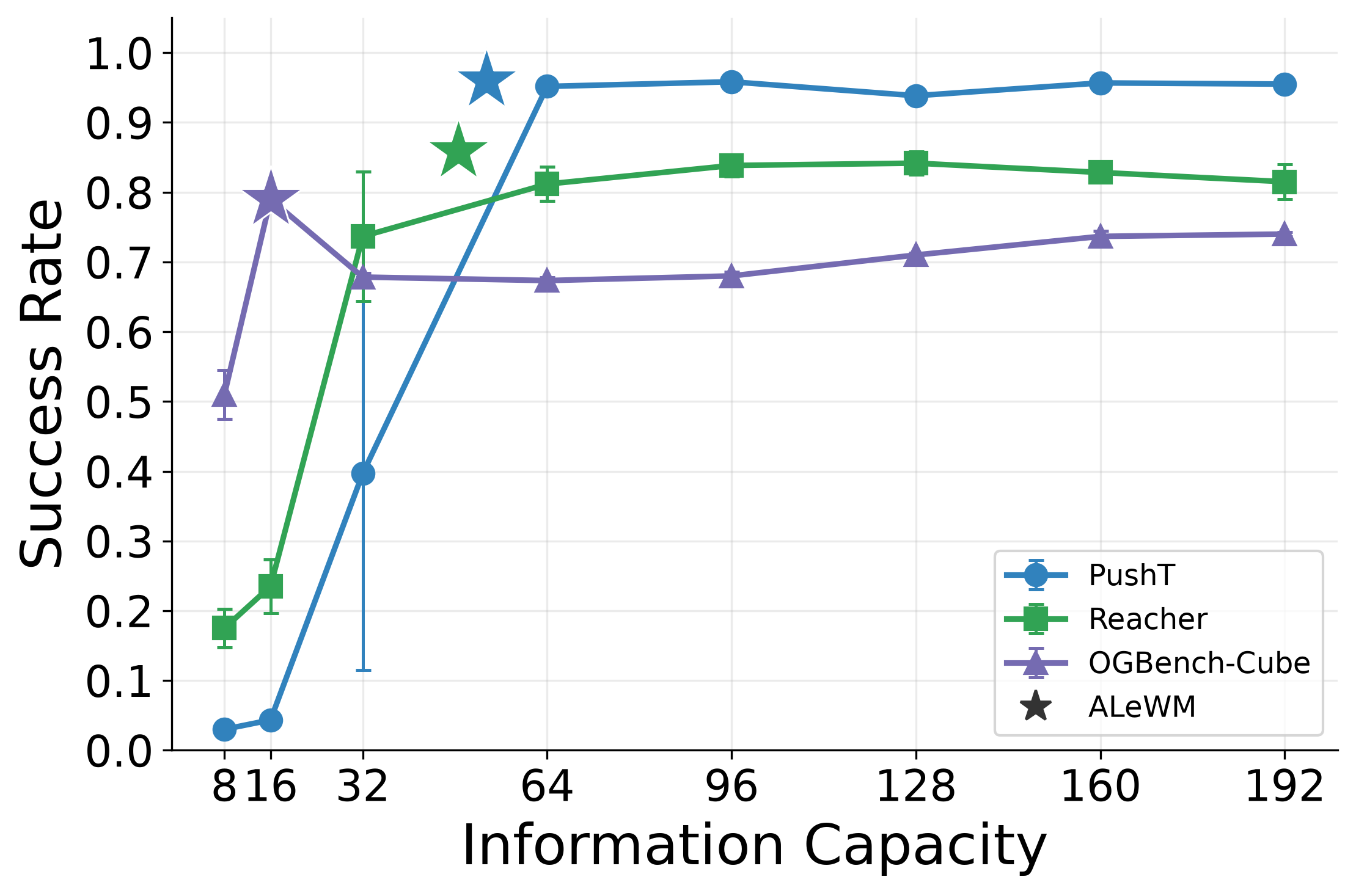}
    \captionof{figure}{Test success rate per fixed prefix size.}
    \label{fig:rollout-prefix-sweep}
  \end{minipage}\hfill
  \begin{minipage}[t]{0.32\linewidth}
    \vspace{0pt}\centering
    \captionsetup{font=footnotesize}
    \includegraphics[width=0.9\linewidth]{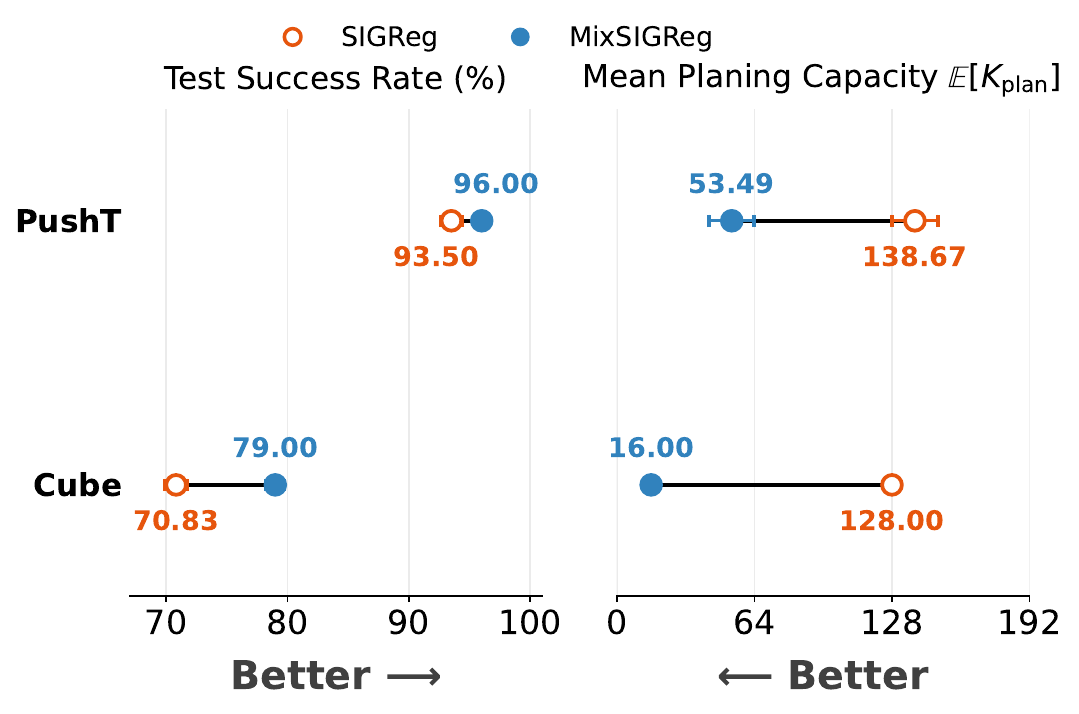}
    \captionof{figure}{\MN{} with MixSIGReg vs with SIGReg}
    \label{fig:sigreg-mixsigreg}
  \end{minipage}
\end{figure}

% TODO: Capacity-distribution diagnostics, mixed-dataset adaptation,
% prior-degree ablation, and example rollouts.
% \begin{table}[H]
%   \caption{MixSIGReg vs SIGReg on ViT-Tiny ($d_{\max}=192$) mean SR and capacity ($\pm$ SEM). }  %Results report the means $\pm$ SEM of test success rate (SR, \%) and selected test-time capacity $\E[K_{\text{plan}}]$ over three seeds.}
%   \label{tab:sigreg-vs-mixsigreg}
%   \centering
%   \captionsetup{font=small}
%   \scriptsize
%   \setlength{\tabcolsep}{3pt}
%   \scalebox{0.9}{%
%   \begin{tabular}{lrrrr}
%     \toprule
%     Regularizer
%       & \multicolumn{2}{c}{PushT}
%       & \multicolumn{2}{c}{OGBench-Cube} \\
%       & SR ($\uparrow$) & $\E[K_{\text{plan}}]$ ($\downarrow$)
%       & SR ($\uparrow$) & $\E[K_{\text{plan}}]$ ($\downarrow$) \\
%     \midrule
%     SIGReg
%       & $93.50\pm0.87$ & $138.67\pm10.67$
%       & $70.83\pm0.93$ & $128.00\pm0.00$ \\
%     MixSIGReg
%       & $\mathbf{96.00\pm0.00}$ & $\mathbf{53.49\pm10.51}$
%       & $\mathbf{79.00\pm0.76}$ & $\mathbf{16.00\pm0.00}$ \\
%     \bottomrule
%   \end{tabular}%
%   }
% \end{table}

\textbf{\MN{} with SIGReg.}
To verify the contribution of our proposed prior-weighted mixture regularizer (MixSIGReg) we compare \MN{} with MixSIGReg vs \MN{} with SIGReg on masked embeddings using the ViT-Tiny backbone and $d_{\max}=192$. For each regularizer and dataset, Figure~\ref{fig:sigreg-mixsigreg} reports the mean test success rate and capacity of the best configuration, with SEM over $3$ seeds. From the figure, MixSIGReg improves the mean test success rate by $2.50$ percentage points in PushT and $8.17$ points in OGBench-Cube, while reducing the mean selected prefix length from $138.67$ to $53.49$ and from $128.00$ to $16.00$, respectively. These results corroborate the importance of MixSIGReg both in terms of accuracy and capacity allocation.

\section{Conclusion}
This paper presents \MN{}, a method for adaptively selecting the active prefix size in world models. \MN{} introduces two components: a selector network that learns a probability distribution over active prefixes, and MixSIGReg, a regularizer for embeddings with varying prefix sizes that encourages a decreasing variance profile. Together, they encourage predictive information to concentrate in early coordinates through a self-supervised objective. Planning uses only the active prefixes using the largest selector probability. Across the evaluated benchmarks, \MN{} achieves higher mean success with compact planning prefixes, with the clearest gains on TwoRoom and OGBench-Cube. Several directions remain for improving the method. First, the capacity network is learned jointly with the world model, so its behavior can vary with initialization and other sources of randomness. Second, the MixSIGReg prior depends on a polynomial over capacity ranks and its degree. Alternatives that account for capacity values or avoid strict variance drops at capacity boundaries could improve the learned world model and potentially advance identifiability. Third, \MN{} uses the straight-through Gumbel--Softmax estimator, whose gradients are biased despite its practical effectiveness; exploring unbiased alternatives is a promising direction.

\bibliographystyle{iclr2027_conference}
\bibliography{references}
\newpage

\appendix
\section{LeWM Background}
\label{sec:background-details}

\textbf{Joint-embedding prediction.}
Given observations $\rvo_t,\rvo_{t+1}$ and action $\rva_t$, LeWM \citep{maes2026leworldmodel} forms:
\begin{equation}
  \rvs_t=\enc(\rvo_t),\qquad \rvs_{t+1}=\enc(\rvo_{t+1}),\qquad
  \hat \rvs_{t+1}=\pred(\rvs_t, \rva_t).
  \label{eq:summary-lewm-map}
\end{equation}
Up to empirical averaging and scalar loss weights, the LeWM objective is:
\begin{equation}
  \mathcal L_{\mathrm{LeWM}}
  =
  \left\|\hat \rvs_{t+1}-\rvs_{t+1}\right\|_2^2
  +
  \lambda\,\SIGReg(\rmS),
  \label{eq:summary-lewm-loss}
\end{equation}
where $\rmS$ is the batch/history tensor of full embeddings.  In practice,
LeWM uses a window of preceding embeddings and actions rather than only the
latest pair as shown in Eq.~\ref{eq:summary-lewm-map}.

\textbf{SIGReg regularization.}
Let $\rmS\in\sR^{F\times B\times d}$ contain $F$ frame positions for each of
$B$ sequences.  For $P$ projection directions
$\rvu^{(p)}\in\sS^{d-1}$, SIGReg \citep{balestriero2025lejepa} computes the empirical characteristic
function separately at each frame position $t$,
\begin{equation}
    \hat\phi_t^{(p)}(\tau)
    =\frac{1}{B}\sum_{b=1}^B
    \exp\!\left(i\tau\rvu^{(p)\top}\rvs_{b,t}\right),
\end{equation}
compares it with the standard-normal characteristic function
$\phi_{\gN}(\tau)=\exp(-\tau^2/2)$ through the weighted Epps--Pulley statistic,
\begin{equation}
\begin{aligned}
        EP_t^{(p)}
    =B\int w(\tau)
    \left|\hat\phi_t^{(p)}(\tau)-\phi_{\gN}(\tau)\right|^2d\tau,
\end{aligned}
\label{eq:ep}
\end{equation}
averaged over $p$ and $t$. The batch-size factor $B$ is included in both SIGReg and MixSIGReg. Both use the same frequency window and quadrature, specified in Appendix~\ref{sec:main-results-implementation}.

\textbf{Latent planning.}
Planning in LeWM optimizes candidate action sequences by rolling out the (trained) predictor in latent space based on the encoding of the initial and goal observations $\rvo_1$ and $\rvo_g$, respectively:
\begin{equation}
    \hat\rvs_1=\enc(\rvo_1),
    \qquad
    \hat\rvs_{t+1}=\pred(\hat\rvs_t,\rva_t),
    \qquad
    \argmin_{\rva_{1:H-1}}
    \left\|\hat\rvs_H-\enc(\rvo_g)\right\|_2^2.
  \label{eq:latent-planing}
\end{equation}
Here $H$ is the planning horizon.  Candidate actions are optimized using the Cross-Entropy Method (CEM) \citep{rubinstein2004cross}.

\section{Straight-through prefix sampling}
\label{sec:st_estimator}
Define $q_c \coloneqq q_\psi(k_c | \rvw)$ an element in the vector of probabilities under $\rvq \in \Delta^{C-1}$. In addition, define a matrix of masks $\rmM \in \{0,1\}^{C \times d}$, with row $c$ equal to $\rvm(k_c)^\top$.
To sample a prefix mask we apply the following steps:
\begin{enumerate}
    \item Draw $C$ independent Gumbel variables
    $g_c=-\log[-\log(U_c)]$, with
    $U_c\sim\operatorname{Uniform}(0,1)$, and collect them in
    $\rvg=(g_1,\ldots,g_C)$.
    \item Produce a Gumbel-Softmax sample:
    \begin{equation}
      \boldsymbol{\chi}_{\mathrm{soft}}
      =\operatorname{softmax}\!\left(
      \frac{\log\rvq+\rvg}{\eta}\right),
      \qquad
      c^\ast
      =\argmax_c\chi_{\mathrm{soft}}^c
      =\argmax_c(\log q_c+g_c).
      \label{eq:gs_soft_probs}
    \end{equation}
    where $\eta>0$ is the temperature set to $0.5$ throughout.
    \item Construct
    $\boldsymbol{\chi}_{\mathrm{hard}}=\operatorname{onehot}(c^\ast)
    \in\{0,1\}^C$.
    \item Use the straight-through vector
    $\boldsymbol{\chi}=\boldsymbol{\chi}_{\mathrm{hard}}
    -\operatorname{stopgrad}(\boldsymbol{\chi}_{\mathrm{soft}})
    +\boldsymbol{\chi}_{\mathrm{soft}}$.
    \item Form the prefix mask $\rvm=\rmM^\top\boldsymbol{\chi}$.
\end{enumerate}

\section{Theoretical Claims}
\label{sec:theory-details}

We next formalize the two mechanisms induced by the capacity-matched target and nested prediction.  %The first mechanism prescribes an ordered variance budget for the masked aggregate representation.  The second makes earlier coordinates available to the predictor no less often than later coordinates. 
We distinguish the corresponding prior and selector survival probabilities throughout.

Let $K_\pi\sim\pi_0(\cdot;\alpha)$ and let $\boldsymbol{\epsilon}_0\sim\gN(0,\rmI_d)$ be independent.  The random variable $\rvz_0=\rmM_{K_\pi}\boldsymbol{\epsilon}_0$ has distribution $p_0$ from
Eq.~\ref{eq:prior-marginal-target}.  Define the prior survival probability
\begin{equation}
    \rho_\pi^j
    =
    \Pr(K_\pi\geq j)
    =
    \sum_{\substack{k\in\mathcal K\\k\geq j}}\pi_0(k;\alpha),
    \qquad j=1,\ldots,d.
    \label{eq:prior-survival-probability}
\end{equation}

\begin{proposition}[Ordered aggregate variance]
\label{prop:ordered-aggregate-variance}
The capacity-matched target satisfies
\begin{equation}
    \E[\rvz_0]=0,
    \qquad
    \operatorname{Cov}(\rvz_0)
    =
    \diag(\rho_\pi^1,\ldots,\rho_\pi^d),
    \qquad
    \rho_\pi^1\geq\cdots\geq\rho_\pi^d.
    \label{eq:ordered-target-covariance}
\end{equation}
Moreover,
\begin{equation}
    \Tra\!\left(\operatorname{Cov}(\rvz_0)\right)
    =
    \sum_{j=1}^d\rho_\pi^j
    =
    \E[K_\pi],
    \qquad
    \rho_\pi^j-\rho_\pi^{j+1}=\pi_0(j;\alpha),
    \label{eq:ordered-target-trace}
\end{equation}
where $\rho_\pi^{d+1}=0$ and $\pi_0(j;\alpha)=0$ when
$j\notin\mathcal K$.  Hence, the target variance is non-increasing and drops
only at supported capacity boundaries; it is constant within every intervening
coordinate block.
\end{proposition}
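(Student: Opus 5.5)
The plan is a direct moment computation using the representation $\rvz_0=\rmM_{K_\pi}\boldsymbol{\epsilon}_0$ with $K_\pi$ independent of $\boldsymbol{\epsilon}_0\sim\gN(0,\rmI_d)$. I would condition on $K_\pi$ throughout, use the tower property, and then average over the prior.

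\textbf{Mean and covariance.} Since $\rmM_{K_\pi}$ is a deterministic function of $K_\pi$ and is independent of $\boldsymbol{\epsilon}_0$, we have
\[
\E[\rvz_0]=\E[\rmM_{K_\pi}]\,\E[\boldsymbol{\epsilon}_0]=0.
\]
For the second moment,
\[
\E[\rvz_0\rvz_0^\top]=\E\bigl[\rmM_{K_\pi}\E[\boldsymbol{\epsilon}_0\boldsymbol{\epsilon}_0^\top]\rmM_{K_\pi}^\top\bigr]=\E[\rmM_{K_\pi}^2]=\E[\rmM_{K_\pi}],
\]
because $\rmM_k$ is a diagonal $0/1$ matrix and hence idempotent. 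Because the mean is zero, this second moment equals the covariance. The diagonal entry $j$ is $\E[\mathds{1}[j\le K_\pi]]=\Pr(K_\pi\ge j)=\rho_\pi^j$, and the off-diagonal entries vanish. Monotonicity follows because the events $\{K_\pi\ge j\}$ are nested and decreasing in $j$.

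\textbf{Trace identity.} Summing the diagonal entries gives
\[
\sum_{j=1}^d\rho_\pi^j=\E\Bigl[\sum_{j=1}^d\mathds{1}[j\le K_\pi]\Bigr]=\E[K_\pi],
\]
using $K_\pi\le d$. This is the tail-sum formula for a positive integer random variable.

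\textbf{Differences.} By definition,
\[
\rho_\pi^j-\rho_\pi^{j+1}=\Pr(K_\pi\ge j)-\Pr(K_\pi\ge j+1)=\Pr(K_\pi=j).
\]
This equals $\pi_0(j;\alpha)$ when $j\in\mathcal K$ and $0$ otherwise. The convention $\rho_\pi^{d+1}=0$ covers $j=d$, since $K_\pi\le k_C=d$. Consequently the variance is constant on each block $k_{c-1}<j\le k_c$, where it equals $\sum_{\ell\ge c}\pi_0(k_\ell;\alpha)$, and it drops by $\pi_0(k_c;\alpha)$ right after $k_c$.

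There is no real obstacle here. The only points needing care are:
\begin{itemize}
\item Checking that $\rmM_{K_\pi}\boldsymbol{\epsilon}_0$ indeed has law $p_0$ in Eq.~\ref{eq:prior-marginal-target}. Conditionally on $K_\pi=k$, the vector is $\gN(0,\rmI_k)\otimes\delta_0$, and mixing over $\pi_0$ gives the mixture.
\item Noting that strict decreases occur only if $\pi_0(k_c;\alpha)>0$. This holds since every weight $(C-c+1)^\alpha$ is positive.
\end{itemize}
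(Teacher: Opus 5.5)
Your proposal is correct and follows essentially the same route as the paper: you condition on $K_\pi$ to get $\operatorname{Cov}(\rvz_0)=\E[\rmM_{K_\pi}]=\sum_{k}\pi_0(k;\alpha)\rmM_k$, read off $\rho_\pi^j$ on the diagonal, use the nested events $\{K_\pi\geq j\}$ for monotonicity and for the differences, and use a tail-sum (equivalently, an exchange of finite sums) for the trace; your independence-plus-idempotence step is the paper's law of total covariance with zero conditional mean. Your extra remark that every prior weight $(C-c+1)^\alpha$ is positive, so the variance actually drops at each supported boundary $k_c$, is a small addition the paper leaves implicit.
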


\begin{proof}
Conditional on $K_\pi=k$, the target has mean zero and covariance $\rmM_k$.
The law of total covariance therefore gives
\begin{equation}
    \operatorname{Cov}(\rvz_0)
    =
    \sum_{k\in\mathcal K}\pi_0(k;\alpha)\rmM_k.
\end{equation}
The $j$th diagonal entry equals
$\sum_{k\geq j}\pi_0(k;\alpha)=\rho_\pi^j$, while every off-diagonal entry is
zero.  The events $\{K_\pi\geq j+1\}\subseteq\{K_\pi\geq j\}$ imply
monotonicity, and their probability difference is
$\Pr(K_\pi=j)=\pi_0(j;\alpha)$.  Finally, exchanging the order of two finite
sums yields
\begin{equation}
    \sum_{j=1}^d\rho_\pi^j
    =
    \sum_{k\in\mathcal K}\pi_0(k;\alpha)
    \sum_{j=1}^d\mathds{1}[j\leq k]
    =
    \sum_{k\in\mathcal K}k\pi_0(k;\alpha)
    =
    \E[K_\pi].
\end{equation}
\end{proof}

Proposition~\ref{prop:ordered-aggregate-variance} says that the capacity prior sets a variance budget for the masked reference distribution.  Earlier coordinates receive at least as much of this budget because every cutoff that retains a later coordinate also retains the earlier ones, and the total budget equals the prior's expected capacity.  Coordinates in the same capacity block receive the same budget, which drops only where the capacity support permits a cutoff.  This statement concerns only the reference distribution; however by optimizing MixSIGReg the learned selector distribution is encouraged to maintain that structure.

We next separate this prior target from the distribution induced by the learned selector.  Let the context $\rvw$ follow the data distribution, draw $K\mid\rvw\sim q_\psi(\cdot\mid\rvw)$, and let $P_{\theta,\psi,t}$ denote the
resulting population distribution of $\rmM_K\rvs_t$.  Unless stated otherwise, expectations involving $K$, $\rvw$, and $\rvs_t$ are under this joint law. Define the conditional and aggregate selector survival probabilities:
\begin{equation}
    \rho_q^j(\rvw)
    =
    \Pr_{q_\psi}(K\geq j\mid\rvw)
    =
    \sum_{\substack{k\in\mathcal K\\k\geq j}}q_\psi(k\mid\rvw),
    \qquad
    \bar\rho_q^j=\E_{\rvw}[\rho_q^j(\rvw)].
    \label{eq:selector-survival-probability}
\end{equation}

To state what exact distribution matching transfers to the learned masked latent, let $\omega$ be a finite Borel measure whose support is all of $\sR^d$, and define the idealized full-frequency characteristic-function discrepancy.  Here $\phi_{\theta,\psi,t}$ and $\phi_0$ denote the characteristic
functions of $P_{\theta,\psi,t}$ and $p_0$, respectively:
\begin{equation}
    D_\omega^2(P_{\theta,\psi,t},p_0)
    =
    \int_{\sR^d}
    \left|\phi_{\theta,\psi,t}(\boldsymbol{\xi})-\phi_0(\boldsymbol{\xi})\right|^2
    d\omega(\boldsymbol{\xi}).
    \label{eq:population-characteristic-discrepancy}
\end{equation}

\begin{proposition}[Population target-law matching]
\label{prop:population-target-matching}
For $\omega$ as defined above,
\begin{equation}
    D_\omega^2(P_{\theta,\psi,t},p_0)=0
    \quad\Longleftrightarrow\quad
    P_{\theta,\psi,t}=p_0.
    \label{eq:population-target-equivalence}
\end{equation}
\end{proposition}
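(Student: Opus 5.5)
The plan is to prove the two directions separately, with essentially all of the content in the forward implication.

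The reverse direction is immediate. If $P_{\theta,\psi,t}=p_0$, then the two laws have identical characteristic functions, $\phi_{\theta,\psi,t}\equiv\phi_0$ on $\sR^d$. The integrand in Eq.~\ref{eq:population-characteristic-discrepancy} is then identically zero, so $D_\omega^2=0$.

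For the forward direction, assume $D_\omega^2(P_{\theta,\psi,t},p_0)=0$ and set $f(\boldsymbol{\xi})=|\phi_{\theta,\psi,t}(\boldsymbol{\xi})-\phi_0(\boldsymbol{\xi})|^2$. The function $f$ is nonnegative, and it is bounded by $4$, so it is integrable against the finite measure $\omega$ and $D_\omega^2$ is well defined. It is also continuous, because characteristic functions of Borel probability measures are (uniformly) continuous.

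\begin{itemize}
\item A vanishing integral of a nonnegative function gives $f=0$ $\omega$-almost everywhere.
\item Suppose, for contradiction, that $f(\boldsymbol{\xi}_0)>0$ at some point. By continuity, $f>f(\boldsymbol{\xi}_0)/2$ on an open ball $U\ni\boldsymbol{\xi}_0$. Since $\operatorname{supp}\omega=\sR^d$, every nonempty open set has positive $\omega$-mass, so $\omega(U)>0$. Then $\int f\,d\omega\ge \tfrac12 f(\boldsymbol{\xi}_0)\,\omega(U)>0$, a contradiction.
\item Hence $\phi_{\theta,\psi,t}=\phi_0$ everywhere on $\sR^d$. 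The L\'evy uniqueness theorem (characteristic functions determine Borel probability measures on $\sR^d$) then yields $P_{\theta,\psi,t}=p_0$.
\end{itemize}

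The main point needing care is the step from ``zero $\omega$-almost everywhere'' to ``zero everywhere.'' This is exactly where the full-support hypothesis on $\omega$ and the continuity of characteristic functions are used, so it should be written out explicitly as in the second bullet. One should also check that $P_{\theta,\psi,t}$ and $p_0$ are genuine Borel probability measures, so that their characteristic functions exist and are continuous. For $p_0$ this follows because it is a finite mixture of Gaussian-times-Dirac laws. For $P_{\theta,\psi,t}$ it follows because it is the law of the measurable random vector $\rmM_K\rvs_t$.
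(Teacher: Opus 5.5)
Your proof is correct and follows essentially the same route as the paper: the reverse direction is immediate, and for the forward direction you use continuity of characteristic functions and the full support of $\omega$ to upgrade $\omega$-almost-everywhere agreement to agreement on all of $\sR^d$, then conclude by uniqueness of characteristic functions. Your explicit lower bound $\int f\,d\omega\ge \tfrac12 f(\boldsymbol{\xi}_0)\,\omega(U)>0$ and your remark on integrability are minor elaborations of the same argument.
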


\begin{proof}
The reverse implication is immediate.  For the forward implication,
$D_\omega^2=0$ gives
$\phi_{\theta,\psi,t}(\boldsymbol{\xi})=\phi_0(\boldsymbol{\xi})$ for
$\omega$-almost every $\boldsymbol{\xi}$.  Characteristic functions are
continuous.  If they differed at one
point, continuity would give a nonempty open neighborhood on which their
difference remained nonzero, contradicting the full support of $\omega$.
They therefore agree on all of $\sR^d$, and uniqueness of characteristic
functions gives $P_{\theta,\psi,t}=p_0$.
\end{proof}

\begin{corollary}[Learned masked-variance profile]
\label{cor:learned-masked-variance}
Suppose $D_\omega^2(P_{\theta,\psi,t},p_0)=0$.  Define the sequence-level
activation indicator $m_K^j\coloneqq[\rvm(K)]^j=\mathds{1}[K\geq j]$, so that
$z_t^j=m_K^j s_t^j$.  Then, for every coordinate $j$,
\begin{equation}
    \E[m_K^j s_t^j]=0,
    \qquad
    \Var(z_t^j)
    =\E[m_K^j(s_t^j)^2]
    =\rho_\pi^j.
    \label{eq:learned-masked-coordinate-variance}
\end{equation}
Consequently,
\begin{equation}
    \Var(z_t^1)\geq\cdots\geq\Var(z_t^d),
    \qquad
    \sum_{j=1}^d\Var(z_t^j)=\E[K_\pi].
    \label{eq:learned-masked-variance}
\end{equation}
Thus, the learned \emph{masked aggregate} has the non-increasing target
variance profile.  Moreover, if $\bar\rho_q^j>0$, then
\begin{equation}
    \E[s_t^j\mid m_K^j=1]=0,
    \qquad
    \Var(s_t^j\mid m_K^j=1)
    =\frac{\rho_\pi^j}{\bar\rho_q^j}.
    \label{eq:active-coordinate-variance}
\end{equation}
If $\rho_\pi^j>0$, exact matching necessarily implies
$\bar\rho_q^j>0$.  To relate this result to the unmasked encoder coordinate,
suppose additionally that $\E[(s_t^j)^2]<\infty$.  When
$\bar\rho_q^j<1$, define
\begin{equation}
    \mu_0^j=\E[s_t^j\mid m_K^j=0],
    \qquad
    v_0^j=\Var(s_t^j\mid m_K^j=0).
    \label{eq:inactive-coordinate-moments}
\end{equation}
Then the three possible activation regimes give
\begin{equation}
\begin{aligned}
    \Var(s_t^j)
    &=
    \begin{cases}
        v_0^j,
        & \bar\rho_q^j=0 \quad (\rho_\pi^j=0),\\[1mm]
        \rho_\pi^j+(1-\bar\rho_q^j)v_0^j
        +\bar\rho_q^j(1-\bar\rho_q^j)(\mu_0^j)^2,
        & 0<\bar\rho_q^j<1,\\[1mm]
        \rho_\pi^j,
        & \bar\rho_q^j=1,
    \end{cases}\\[-1mm]
    &\geq \rho_\pi^j.
\end{aligned}
    \label{eq:unmasked-coordinate-variance}
\end{equation}
\end{corollary}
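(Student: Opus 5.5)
The proof combines Proposition~\ref{prop:population-target-matching} with Proposition~\ref{prop:ordered-aggregate-variance}. It then uses conditional moment decompositions along the activation indicator $m_K^j$.

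\emph{Step 1: moments of the masked coordinates.} Exact matching $D_\omega^2=0$ gives $P_{\theta,\psi,t}=p_0$ by Proposition~\ref{prop:population-target-matching}. Hence $\rvz_t=\rmM_K\rvs_t$ and $\rvz_0$ share every moment that exists for $p_0$. Since $p_0$ has finite second moments, each $z_t^j$ is square-integrable. Proposition~\ref{prop:ordered-aggregate-variance} then gives $\E[z_t^j]=0$ and $\Var(z_t^j)=\rho_\pi^j$. Because $(m_K^j)^2=m_K^j$, we have $\E[(z_t^j)^2]=\E[m_K^j(s_t^j)^2]$. The monotone ordering and the trace identity $\sum_j\Var(z_t^j)=\E[K_\pi]$ transfer directly from Proposition~\ref{prop:ordered-aggregate-variance}.

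\emph{Step 2: active-coordinate moments.} By the tower property over $\rvw$, $\Pr(m_K^j=1)=\E_{\rvw}[\rho_q^j(\rvw)]=\bar\rho_q^j$. When this is positive, $\E[s_t^j\mid m_K^j=1]=\E[m_K^js_t^j]/\bar\rho_q^j=0$. Similarly, $\E[(s_t^j)^2\mid m_K^j=1]=\rho_\pi^j/\bar\rho_q^j$, which equals the conditional variance because the conditional mean vanishes. For the claim that $\rho_\pi^j>0$ forces $\bar\rho_q^j>0$, argue by contraposition. If $\bar\rho_q^j=0$, then $z_t^j=0$ almost surely. But under $p_0$ the coordinate $z^j$ has variance $\rho_\pi^j>0$, which is a contradiction.

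\emph{Step 3: unmasked variance.} Assume $\E[(s_t^j)^2]<\infty$ and apply the law of total variance with respect to the Bernoulli variable $m_K^j$:
\[
\Var(s_t^j)=\bar\rho_q^j\,\Var(s_t^j\mid m^j_K=1)+(1-\bar\rho_q^j)v_0^j+\bar\rho_q^j(1-\bar\rho_q^j)\bigl(0-\mu_0^j\bigr)^2 .
\]
This uses the two-point between-group variance $p(1-p)(\mu_1-\mu_0)^2$ with $\mu_1=0$. Substituting Step 2 turns the first term into $\rho_\pi^j$, which gives the middle case.
\begin{itemize}
\item If $\bar\rho_q^j=1$, the coordinate is always active, so $\Var(s_t^j)=\Var(z_t^j)=\rho_\pi^j$.
\item If $\bar\rho_q^j=0$, the coordinate is always inactive, so $\Var(s_t^j)=v_0^j$. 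By the contrapositive in Step 2, this forces $\rho_\pi^j=0$.
\end{itemize}
The lower bound $\Var(s_t^j)\ge\rho_\pi^j$ follows because every added term is nonnegative, and in the first case $\rho_\pi^j=0$.

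\emph{Main obstacle.} The delicate points are measure-theoretic bookkeeping rather than real difficulty.
\begin{itemize}
\item The conditioning events must have positive probability, which is why the cases $\bar\rho_q^j\in\{0,1\}$ are split off.
\item $m_K^j$ is a sequence-level variable that is correlated with $\rvs_t$ through $\rvw$. The decomposition therefore has to be done under the joint law, not by factorizing $\E[m_K^j(s_t^j)^2]$ as $\bar\rho_q^j\,\E[(s_t^j)^2]$.
\item Equality of laws yields equality of second moments only because $p_0$ is Gaussian-mixture and hence has all moments. Step 1 relies on this.
\end{itemize}
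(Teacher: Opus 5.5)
Your proposal is correct and follows essentially the same route as the paper. You transfer the moments via Propositions~\ref{prop:population-target-matching} and~\ref{prop:ordered-aggregate-variance}, obtain $\Pr(m_K^j=1)=\bar\rho_q^j$ by iterated expectation, divide $\E[m_K^j(s_t^j)^r]$ by $\bar\rho_q^j$ to get the active conditional moments, and apply the law of total variance conditional on $m_K^j$, treating the degenerate cases $\bar\rho_q^j\in\{0,1\}$ separately. The only cosmetic difference is that you quote the two-point between-group variance formula, whereas the paper expands $\Var(\E[s_t^j\mid m_K^j])$ by hand.
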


\begin{proof}
Proposition~\ref{prop:population-target-matching} gives
$P_{\theta,\psi,t}=p_0$, so Proposition~
\ref{prop:ordered-aggregate-variance} gives
$\E[z_t^j]=0$, $\E[(z_t^j)^2]=\rho_\pi^j$, and the stated total variance.
Since $z_t^j=m_K^j s_t^j$ and $(m_K^j)^2=m_K^j$, these are exactly the first
two moment identities in Eq.~\ref{eq:learned-masked-coordinate-variance}.

All learned-side expectations below are under the joint law specified above.
By iterated expectation,
\begin{align*}
    \Pr(m_K^j=1)
    &=\E_{\rvw}\!\left[
        \E_{K\mid\rvw}[\mathds{1}[K\geq j]]
      \right] \\
    &=\E_{\rvw}\!\left[
        \sum_{k\in\mathcal K}
        q_\psi(k\mid\rvw)\mathds{1}[k\geq j]
      \right] \\
    &=\E_{\rvw}[\rho_q^j(\rvw)]
     =\bar\rho_q^j.
\end{align*}
For $r\in\{1,2\}$ and $\bar\rho_q^j>0$, the defining indicator identity for
conditional expectation, followed by iterated expectation, gives the full
calculation
\begin{align*}
    \E[(s_t^j)^r\mid m_K^j=1]
    &=\frac{
        \E[\mathds{1}[m_K^j=1](s_t^j)^r]
      }{
        \Pr(m_K^j=1)
      } \\
    &=\frac{1}{\bar\rho_q^j}
      \E_{\rvw}\!\left[
        \E_{K\mid\rvw}\!\left[
          \mathds{1}[K\geq j](s_t^j)^r
        \right]
      \right] \\
    &=\frac{1}{\bar\rho_q^j}
      \E_{\rvw}\!\left[
        \sum_{k\in\mathcal K}
        q_\psi(k\mid\rvw)\mathds{1}[k\geq j](s_t^j)^r
      \right] \\
    &=\frac{1}{\bar\rho_q^j}
      \E_{\rvw}[\rho_q^j(\rvw)(s_t^j)^r] \\
    &=\frac{\E[m_K^j(s_t^j)^r]}{\bar\rho_q^j}.
\end{align*}
The third equality uses that $s_t^j$ is fixed once $\rvw$ is fixed.  This is
the conditional-expectation form of Bayes' rule; under a density formulation,
the factor $\rho_q^j(\rvw)/\bar\rho_q^j$ is precisely the likelihood-ratio
weight that changes the data law to the law conditioned on $m_K^j=1$.
Taking $r=1$ and $r=2$ and using the first two moment identities gives
\begin{equation*}
    \E[s_t^j\mid m_K^j=1]=0,
    \qquad
    \E[(s_t^j)^2\mid m_K^j=1]
    =\frac{\rho_\pi^j}{\bar\rho_q^j}.
\end{equation*}
The conditional mean is zero, so the conditional second moment equals the
conditional variance, proving Eq.~\ref{eq:active-coordinate-variance}.
Finally, if $\bar\rho_q^j=0$, then $z_t^j=0$ almost surely and hence has zero
variance; this contradicts $\rho_\pi^j>0$ under exact matching.
For $0<\bar\rho_q^j<1$, the law of total variance conditional on $m_K^j$ gives
\begin{align*}
    \Var(s_t^j)
    ={}&\E\!\left[\Var(s_t^j\mid m_K^j)\right]
    +\Var\!\left(\E[s_t^j\mid m_K^j]\right),\\
    \E\!\left[\Var(s_t^j\mid m_K^j)\right]
    ={}&\bar\rho_q^j\Var(s_t^j\mid m_K^j=1)
    +(1-\bar\rho_q^j)v_0^j.
\end{align*}
For the second term, the active conditional mean is zero and the inactive
conditional mean is $\mu_0^j$, so
\begin{align*}
    \Var\!\left(\E[s_t^j\mid m_K^j]\right)
    &={\E\!\left[\bigl(\E[s_t^j\mid m_K^j]\bigr)^2\right]}
      -{\E\!\left[\E[s_t^j\mid m_K^j]\right]^2}\\
    &=\bar\rho_q^j 0^2+(1-\bar\rho_q^j)(\mu_0^j)^2\\
    &\quad-
      \bigl(\bar\rho_q^j 0+(1-\bar\rho_q^j)\mu_0^j\bigr)^2\\
    &=\bar\rho_q^j(1-\bar\rho_q^j)(\mu_0^j)^2.
\end{align*}
Combining these terms and substituting
Eq.~\ref{eq:active-coordinate-variance} gives the middle case of
Eq.~\ref{eq:unmasked-coordinate-variance}.  If $\bar\rho_q^j=0$, then
$m_K^j=0$ almost surely, so $\Var(s_t^j)=v_0^j$, while exact matching gives
$\rho_\pi^j=\Var(z_t^j)=0$.  If $\bar\rho_q^j=1$, then $m_K^j=1$ almost surely,
so $s_t^j=z_t^j$ and $\Var(s_t^j)=\rho_\pi^j$.  These observations also prove
the lower bound in Eq.~\ref{eq:unmasked-coordinate-variance}.
\end{proof}

\paragraph{Intuition.}
Equation~\ref{eq:active-coordinate-variance} shows how activation frequency
and active-coordinate scale can compensate for one another: for fixed target
variance $\rho_\pi^j$, increasing $\bar\rho_q^j$ decreases the required
conditional variance when the coordinate is active, whereas decreasing
$\bar\rho_q^j$ increases it.  Thus the equation fixes only the product
$\bar\rho_q^j\Var(s_t^j\mid m_K^j=1)=\rho_\pi^j$, not either factor
separately.  Equations~\ref{eq:inactive-coordinate-moments}--
\ref{eq:unmasked-coordinate-variance} also include the encoder values produced
while the coordinate is inactive.  Variation among those values, or a shift
between their mean and the active mean, adds the two nonnegative terms in the
middle case.  These additions can differ across coordinates.  In plain words,
the masked coordinates $z_t^j$ have an ordered variance profile, but the raw
coordinates $s_t^j$ need not inherit that order: a later raw coordinate may
have larger variance than an earlier one.  The raw variances may still happen
to be ordered in a learned representation; the target-law equality simply does
not guarantee it.  This freedom concerns the masked target-law constraint
alone, and the shared model and prediction loss may still constrain inactive
values.  The identities also require exact population matching; a small finite
MixSIGReg surrogate need not satisfy them.

We next quantify the predictive value of nested prefixes while holding the
representation fixed.  Fix the encoder and population data distribution, and
consider a fixed prediction time $t$.  Write $\E_{\mathrm{data}}$ for
expectation under the induced joint law of
$(\rvs_{1:t+1},\rva_{1:t})$, and assume
$\E_{\mathrm{data}}\|\rvs_{t+1}\|_2^2<\infty$.  Write
$\rvs_{1:t}^{1:k}=\{s_\tau^j:1\leq\tau\leq t,\ 1\leq j\leq k\}$ for the first
$k$ coordinates of the latent history, and define
\begin{equation}
    \mathcal F_k
    =
    \sigma(\rvs_{1:t}^{1:k},\rva_{1:t}),
    \qquad
    \rvmu_k=\E_{\mathrm{data}}[\rvs_{t+1}\mid\mathcal F_k],
    \qquad
    R_k=\E_{\mathrm{data}}\|\rvs_{t+1}-\rvmu_k\|_2^2,
    \label{eq:prefix-bayes-risk}
\end{equation}
where $\mathcal F_0=\sigma(\rva_{1:t})$.  Here $\sigma(\cdot)$ denotes the
sigma-algebra generated by its arguments: the collection of events whose
occurrence can be determined from those random variables.  Thus,
$\mathcal F_k$ contains all information in the action history and the first
$k$ coordinates of the latent history, while $\mathcal F_0$ contains only the
action history.  The conditional mean $\rvmu_k$ is the Bayes-optimal
squared-error prediction using that information.  Accordingly, $R_k$ is a
\emph{forced-capacity oracle risk}: it is the population squared error of a
separate Bayes-optimal predictor at prefix $k$.  It does not assume that the
single finite predictor used in training attains these risks. We note that because the
training selector can observe target frames, its mask can convey information
absent from $\mathcal F_k$. Hence, these capacity risks are not automatically lower bounds for a target-dependent selection mechanism, but serve as a useful surrogate.

\begin{proposition}[Survival-weighted predictive value]
\label{prop:survival-weighted-predictive-value}
For $k\leq\ell$,
\begin{equation}
    R_k-R_\ell
    =
    \E_{\mathrm{data}}\|\rvmu_\ell-\rvmu_k\|_2^2
    \geq0.
    \label{eq:nested-bayes-risk-difference}
\end{equation}
Set $k_0=0$ and define the predictive gain of capacity block $c$ as
\begin{equation}
    \Delta_c=R_{k_{c-1}}-R_{k_c}\geq0.
\end{equation}
If $K_\pi\sim\pi_0(\cdot;\alpha)$ is sampled independently of the data, then
\begin{equation}
    \E_{K_\pi}[R_{K_\pi}]
    =
    R_0-
    \sum_{c=1}^C\rho_\pi^{k_c}\Delta_c.
    \label{eq:survival-weighted-risk-decomposition}
\end{equation}
Here $\E_{K_\pi}$ averages only over the independent prior-capacity draw.
%each $R_k$ has already averaged over the population data law.  Independence
%therefore lets the left-hand side also represent the joint population risk of
%sampling a prior capacity and then predicting a data draw at that capacity.
Thus, the predictive gain of each block is weighted by the probability that the block survives the sampled capacity. If, additionally, every block permutation $\sigma$ is feasible
and yields prefix risks
$R_{k_r}^{(\sigma)}=R_0-\sum_{c=1}^{r}\Delta_{\sigma(c)}$
under the same capacity prior, then
$\E_{K_\pi}[R_{K_\pi}^{(\sigma)}]$ is minimized by arranging
the gains in non-increasing order. %This prior-weighted expression is a data-independent reference case.  
For the learned selector, let
$\ell_{k_c}(\rvw)$ denote any forced-capacity loss at context $\rvw$, let
$\ell_0(\rvw)$ denote the corresponding all-zero-prefix loss, and define
$\delta_c(\rvw)=\ell_{k_{c-1}}(\rvw)-\ell_{k_c}(\rvw)$.  Then, context by
context,
\begin{equation}
    \E_{K\sim q_\psi(\cdot\mid\rvw)}[\ell_K(\rvw)]
    =
    \ell_0(\rvw)-
    \sum_{c=1}^C\rho_q^{k_c}(\rvw)\delta_c(\rvw).
    \label{eq:conditional-selector-risk-decomposition}
\end{equation}
The expectation in Eq.~\ref{eq:conditional-selector-risk-decomposition} is only
over the categorical draw of $K$, with the context $\rvw$ held fixed.
The increments $\delta_c(\rvw)$ need not be positive for a shared finite
predictor.
\end{proposition}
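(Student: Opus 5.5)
The plan has four steps, taken in the order of the statement.

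\textbf{Step 1: the identity in Eq.~\ref{eq:nested-bayes-risk-difference}.} I would start from the nesting $\mathcal F_k\subseteq\mathcal F_\ell$ for $k\le\ell$. Since $\E_{\mathrm{data}}\|\rvs_{t+1}\|_2^2<\infty$, both conditional means are square integrable. Write
\[
\rvs_{t+1}-\rvmu_k=(\rvs_{t+1}-\rvmu_\ell)+(\rvmu_\ell-\rvmu_k),
\]
expand the squared norm, and eliminate the cross term. The vector $\rvmu_\ell-\rvmu_k$ is $\mathcal F_\ell$-measurable, and $\E[\rvs_{t+1}-\rvmu_\ell\mid\mathcal F_\ell]=0$. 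Conditioning on $\mathcal F_\ell$ (the tower property) therefore makes the cross term zero. This gives $R_k=R_\ell+\E\|\rvmu_\ell-\rvmu_k\|_2^2$, and hence $\Delta_c\ge0$. The convention $\mathcal F_0=\sigma(\rva_{1:t})\subseteq\mathcal F_{k_1}$ covers $c=1$.

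\textbf{Step 2: the survival-weighted decomposition in Eq.~\ref{eq:survival-weighted-risk-decomposition}.} Telescoping gives $R_{k_r}=R_0-\sum_{c=1}^r\Delta_c$ for each $r$. Taking the expectation over $K_\pi$, which is independent of the data so that each $R_k$ is a fixed number, gives
\[
\E_{K_\pi}[R_{K_\pi}]=R_0-\sum_r\pi_0(k_r)\sum_{c\le r}\Delta_c .
\]
I would then swap the two finite sums. The resulting coefficient of $\Delta_c$ is $\sum_{r\ge c}\pi_0(k_r)=\Pr(K_\pi\ge k_c)=\rho_\pi^{k_c}$, by Eq.~\ref{eq:prior-survival-probability}. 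The selector identity in Eq.~\ref{eq:conditional-selector-risk-decomposition} follows the same pattern context by context. With $\rvw$ fixed, telescope the losses $\ell_{k_c}(\rvw)$ and weight them by $q_\psi(k_r\mid\rvw)$; after swapping sums, the coefficients become $\rho_q^{k_c}(\rvw)$ from Eq.~\ref{eq:selector-survival-probability}. No sign assumption on $\delta_c$ is needed, which is why the statement allows negative increments.

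\textbf{Step 3: the ordering claim.} Under a block permutation $\sigma$, the same summation swap yields
\[
\E[R^{(\sigma)}_{K_\pi}]=R_0-\sum_c\rho_\pi^{k_c}\Delta_{\sigma(c)}.
\]
Minimizing this quantity means maximizing $\sum_c\rho_\pi^{k_c}\Delta_{\sigma(c)}$. The weights $\rho_\pi^{k_c}$ are non-increasing in $c$ by Proposition~\ref{prop:ordered-aggregate-variance}. The rearrangement inequality then says the sum is maximized by pairing the largest gains with the largest weights, that is, by ordering the gains non-increasingly.

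I would also give the elementary exchange argument as a self-contained alternative. Suppose $c<c'$ and $\Delta_{\sigma(c)}<\Delta_{\sigma(c')}$. Swapping the two gains changes the sum by
\[
(\rho_\pi^{k_c}-\rho_\pi^{k_{c'}})(\Delta_{\sigma(c')}-\Delta_{\sigma(c)})\ge0 .
\]
Repeating such swaps, as in bubble sort, reaches the sorted order without ever increasing the risk.

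\textbf{Main obstacle.} None of these steps is technically hard. The point needing the most care is the cross-term cancellation in Step 1. It requires vector-valued conditional expectations and enough integrability to apply the tower property: $\langle \rvs_{t+1}-\rvmu_\ell,\rvmu_\ell-\rvmu_k\rangle$ must be integrable, which follows from the Cauchy--Schwarz inequality and $L^2$ contractivity of conditional expectation. A second, minor point is that in Step 3 "minimized" should be read as "a minimizer is the sorted arrangement". Ties in $\rho_\pi$ or in $\Delta$ can make the minimizer non-unique, and the statement claims no uniqueness.
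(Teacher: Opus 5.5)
Your proposal is correct and follows the paper's proof essentially step for step. The paper likewise kills the cross term by conditioning on $\mathcal F_\ell$, uses telescoping plus a finite sum exchange for both the prior and the selector identities, and proves the ordering claim with the same pairwise exchange bound $(\rho_\pi^{k_i}-\rho_\pi^{k_j})(\Delta_j-\Delta_i)\ge 0$. Your rearrangement-inequality framing is just a packaged form of that exchange argument.
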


\begin{proof}
For $k\leq\ell$, $\mathcal F_k\subseteq\mathcal F_\ell$.  The space
$L^2(\mathcal F_k;\sR^d)$ is the closed subspace of square-integrable,
$\mathcal F_k$-measurable random vectors, with inner product
$\langle\rvu,\rvv\rangle_{L^2}
=\E_{\mathrm{data}}[\rvu^\top\rvv]$.  The conditional mean $\rvmu_k$ is the
orthogonal projection of $\rvs_{t+1}$ onto this subspace.  The projections are
\emph{nested} because the larger prefix gives the inclusion
$L^2(\mathcal F_k;\sR^d)\subseteq L^2(\mathcal F_\ell;\sR^d)$.  In particular,
$\rvmu_\ell-\rvmu_k$ is $\mathcal F_\ell$-measurable, whereas the residual
$\rvs_{t+1}-\rvmu_\ell$ is orthogonal to every vector in that larger subspace.
Explicitly,
\begin{align*}
    &\E_{\mathrm{data}}\!\left[
      (\rvs_{t+1}-\rvmu_\ell)^\top(\rvmu_\ell-\rvmu_k)\right]\\
    &\quad=\E_{\mathrm{data}}\!\left[
      \E_{\mathrm{data}}\!\left[
      (\rvs_{t+1}-\rvmu_\ell)^\top(\rvmu_\ell-\rvmu_k)
      \mid\mathcal F_\ell\right]\right]\\
    &\quad=\E_{\mathrm{data}}\!\left[
      \E_{\mathrm{data}}[\rvs_{t+1}-\rvmu_\ell\mid\mathcal F_\ell]^\top
      (\rvmu_\ell-\rvmu_k)\right]
    =0.
\end{align*}
Using
$\rvs_{t+1}-\rvmu_k=(\rvs_{t+1}-\rvmu_\ell)
+(\rvmu_\ell-\rvmu_k)$ and expanding the squared norm now gives
\begin{align*}
    R_k
    &=\E_{\mathrm{data}}\|\rvs_{t+1}-\rvmu_\ell\|_2^2
      +\E_{\mathrm{data}}\|\rvmu_\ell-\rvmu_k\|_2^2\\
    &\quad+2\E_{\mathrm{data}}\!\left[
      (\rvs_{t+1}-\rvmu_\ell)^\top(\rvmu_\ell-\rvmu_k)\right]\\
    &=R_\ell+\E_{\mathrm{data}}\|\rvmu_\ell-\rvmu_k\|_2^2,
\end{align*}
which proves Eq.~\ref{eq:nested-bayes-risk-difference}.

For a supported capacity $k_r$, telescoping gives
$R_{k_r}=R_0-\sum_{c=1}^r\Delta_c$.  Since $K_\pi$ has the finite support
$\{k_1,\ldots,k_C\}$, taking expectation over the prior-capacity draw and
exchanging the two finite sums gives the full calculation
\begin{align}
    \E_{K_\pi}[R_{K_\pi}]
    &=\sum_{r=1}^C\pi_0(k_r;\alpha)R_{k_r}\notag\\
    &=\sum_{r=1}^C\pi_0(k_r;\alpha)
      \left(R_0-\sum_{c=1}^r\Delta_c\right)\notag\\
    &=R_0-\sum_{c=1}^C
      \left(\sum_{r=c}^C\pi_0(k_r;\alpha)\right)\Delta_c\notag\\
    &=R_0-\sum_{c=1}^C\Pr(K_\pi\geq k_c)\Delta_c.
    \label{eq:prior-risk-sum-exchange}
\end{align}
The last equality uses
$\{K_\pi\geq k_c\}=\{K_\pi\in\{k_c,\ldots,k_C\}\}$.  Since
$\Pr(K_\pi\geq k_c)=\rho_\pi^{k_c}$,
Eq.~\ref{eq:prior-risk-sum-exchange} is exactly
Eq.~\ref{eq:survival-weighted-risk-decomposition}. For $i<j$ with $\Delta_i<\Delta_j$, exchanging the two gains
reduces the expected risk by
$(\rho_\pi^{k_i}-\rho_\pi^{k_j})(\Delta_j-\Delta_i)\geq0$;
successive exchanges establish the ordering claim.
For the final identity, the same telescope gives
$\ell_{k_r}(\rvw)=\ell_0(\rvw)-\sum_{c=1}^r\delta_c(\rvw)$.  Taking expectation
over $K\sim q_\psi(\cdot\mid\rvw)$ and exchanging the two finite sums yields
Eq.~\ref{eq:conditional-selector-risk-decomposition}.
\end{proof}

\paragraph{Predictive-allocation interpretation.}
Proposition~\ref{prop:survival-weighted-predictive-value} explains why a
component's position in the prefix matters.  In the independent-prior case of
Eq.~\ref{eq:survival-weighted-risk-decomposition}, suppose each component has a
fixed nonnegative gain, and these gains add without changing when components
are moved or combined.  Each position inherits the probability that its block
survives the cutoff, so Proposition~\ref{prop:survival-weighted-predictive-value}
weights each component's gain by that probability.  A component placed earlier
is retained at least as often; therefore, exchanging a larger late gain with a
smaller early gain cannot increase expected prediction error, and strictly
decreases it when their retention probabilities differ.  Positions with the
same retention probability remain unordered.  For the learned selector,
Eq.~\ref{eq:conditional-selector-risk-decomposition} provides the same
accounting within each context, but the gains can interact, change when
components move, or become negative, and one shared representation need not
realize every context-specific ordering.  This argument therefore explains the
ordering pressure created by Proposition~\ref{prop:survival-weighted-predictive-value};
it is not a convergence guarantee for joint training.

% \paragraph{Interpretation and scope.}
% Proposition~\ref{prop:ordered-aggregate-variance} is an exact property of the target, while the non-increasing exposure probabilities follow directly from the hard nested masks.  Proposition~\ref{prop:population-target-matching} and Corollary~
% \ref{cor:learned-masked-variance} transfer the target variance profile only under the stronger population, full-frequency matching premise. Proposition~\ref{prop:survival-weighted-predictive-value} gives a fixed-representation
% oracle benchmark and the exact contextual telescoping identity.  Under the additional additive, context-independent assumptions above, its survival weights favor assigning larger predictive gains to earlier blocks. Note that these one-step prediction results do not guarantee accurate recursive prefix rollouts or preservation of goal-relevant information.

\section{Additional Experiments \& Analysis}
\subsection{Rollout target: active vs full} 
\label{sec:rollout_target_act_vs_full}
Here, we test whether planning should compare the predicted (unmasked) state $\rvs_{H}$ with the full goal embedding ($\rvs_g$) or only with the selected prefix embedding by the capacity network according to Eq.~\ref{eq:active-prefix-goal-cost}. Both conditions keep the selected prefix fixed and mask all intermediate rollout states. For each setting, we re-evaluate ViT-Tiny ($d_{\max}=192$) checkpoints without retraining.  Table~\ref{tab:rollout-active-vs-full} shows that restricting the rollout objective to active dimensions improves the success rate for both OGBench-Cube and Reacher, with an unchanged mean success on PushT. This experiment supports our selection, providing an indication that the information important for accurate prediction of the goal state concentrates mostly at the prefix.

\begin{table}[H]
  \caption{Effect of the rollout target capacity size on test success rate (SR, \%) with ViT-Tiny \MN{} having $d_{\max}=192$.  Results present test success rate (SR, \%) $\pm$ SEM over 3 seeds.}
  \label{tab:rollout-active-vs-full}
  \centering
  \small
  \setlength{\tabcolsep}{4pt}
  \begin{tabular}{lrrr}
    \toprule
    Rollout target
      & PushT SR ($\uparrow$)
      & Reacher SR ($\uparrow$)
      & OGBench-Cube SR ($\uparrow$) \\
    \midrule
    Full Capacity $(d_{max})$
      & $\mathbf{96.00\pm0.58}$
      & $83.50\pm2.25$
      & $75.17\pm0.44$ \\
    Max Probability $(K_{\text{plan}})$
      & $\mathbf{96.00\pm0.00}$
      & $\mathbf{85.83\pm1.01}$
      & $\mathbf{79.00\pm0.76}$ \\
    \bottomrule
  \end{tabular}
\end{table}

\subsection{Capacity dynamics}
\label{sec:cap_dynamics}
Figure~\ref{fig:capacity-dynamics} tracks the learned categorical capacity distribution during training for one ViT-Tiny run of \MN{} with $d_{\max}=192$ for PushT, Reacher, and OGBench-Cube. The figure shows that in all three datasets, most redistribution occurs during the first few thousand optimization steps, after which the probabilities evolve more gradually, sometimes crossing one another. The converged mixtures are task dependent: PushT and Reacher assign the largest mean probability to $k=32$, whereas OGBench-Cube assigns the largest probability to $k=16$.  OGBench-Cube also concentrates more strongly on the two smallest nonzero-probability capacities, $k=16$ and $k=32$. By contrast, PushT and Reacher retain more mass across $k\in\{32,64,96\}$.  The probability of $k=8$ approaches zero in every dataset.  These trajectories show that the capacity selector does not merely preserve its initialization: it learns distinct distributions for the three control domains.

\begin{figure*}[t]
  \centering
  \begin{subfigure}[t]{0.32\linewidth}
    \centering
    \includegraphics[width=\linewidth]{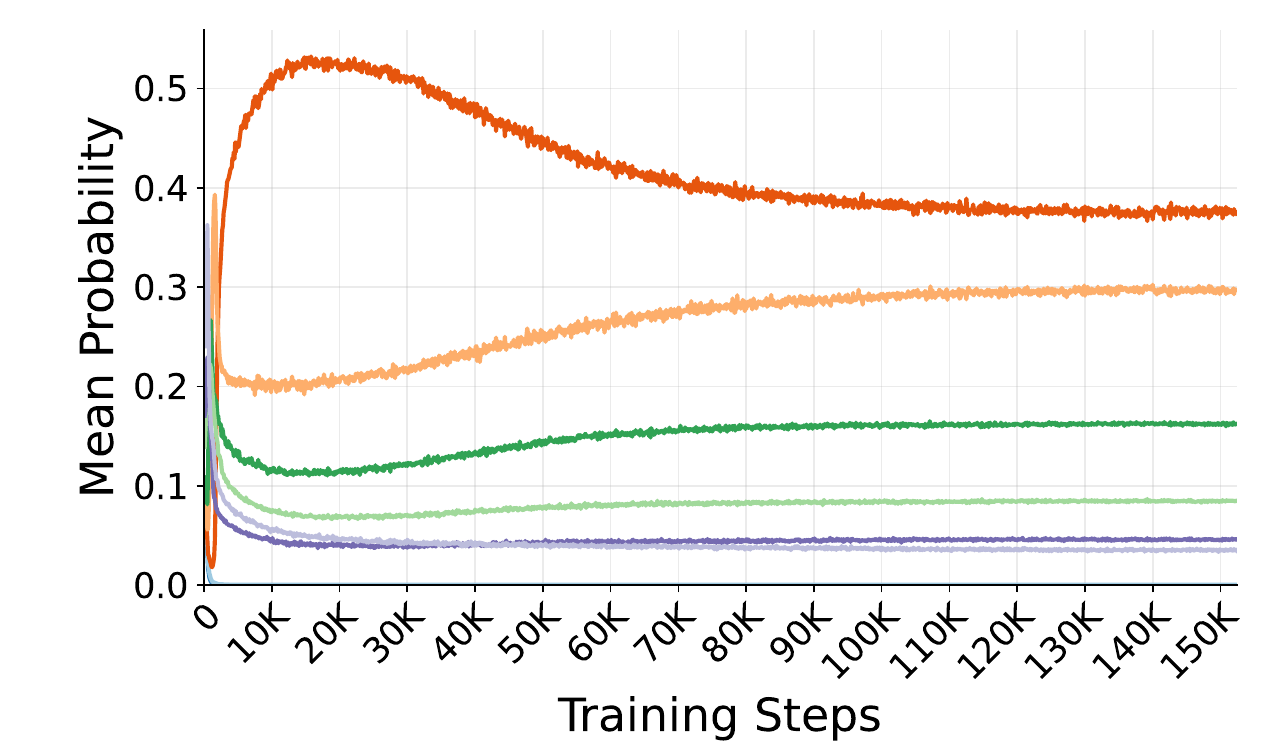}
    \caption{PushT.}
  \end{subfigure}
  \hfill
  \begin{subfigure}[t]{0.32\linewidth}
    \centering
    \includegraphics[width=\linewidth]{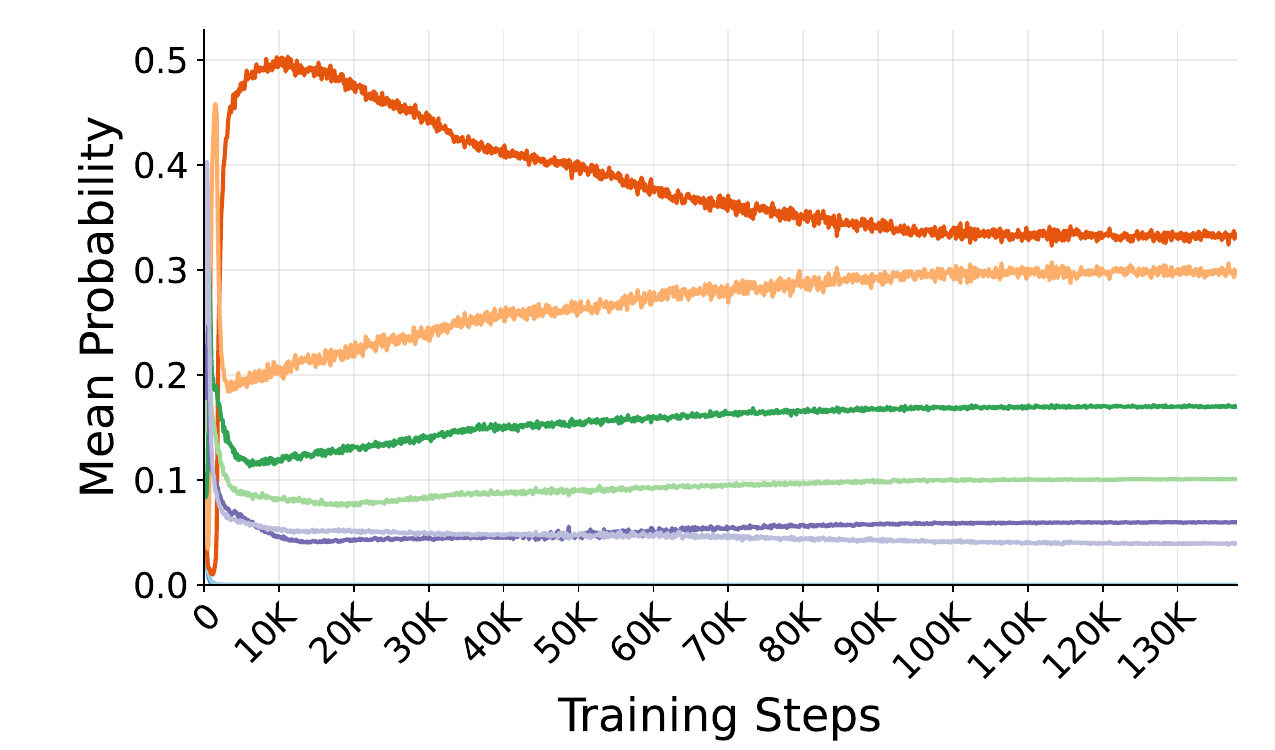}
    \caption{Reacher.}
  \end{subfigure}
  \hfill
  \begin{subfigure}[t]{0.32\linewidth}
    \centering
    \includegraphics[width=\linewidth]{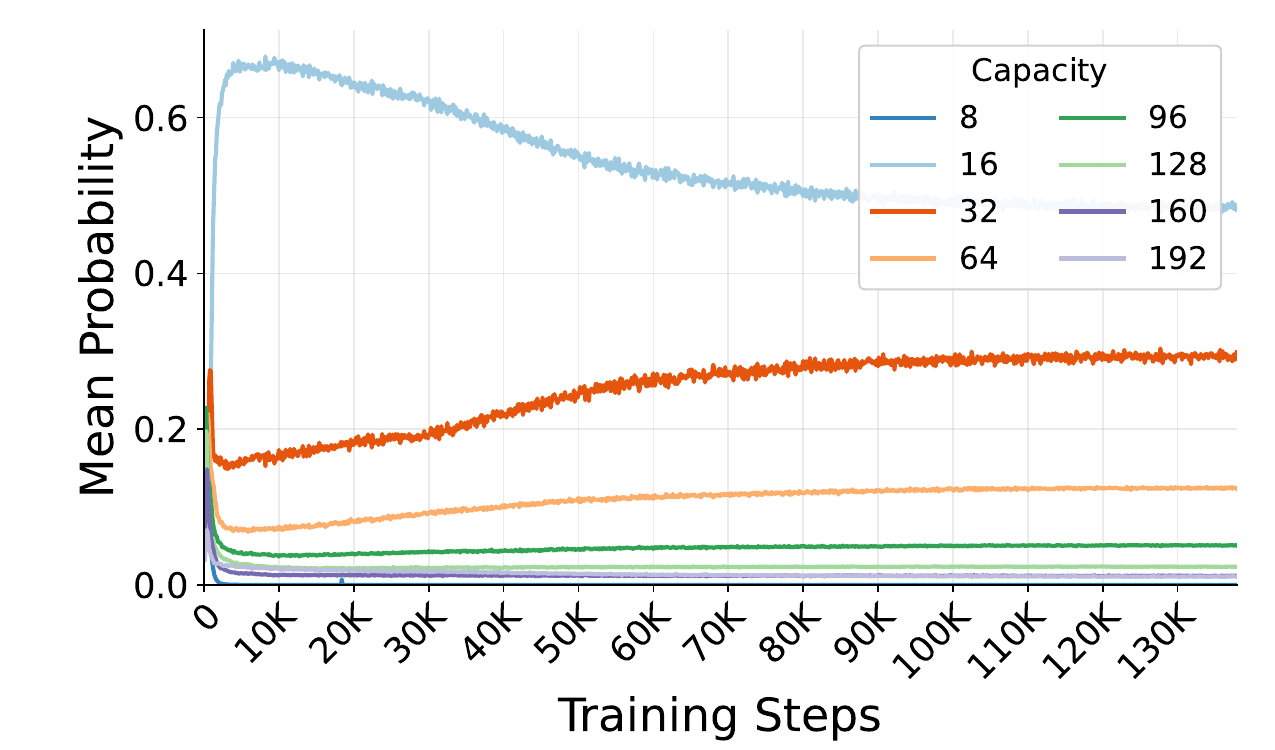}
    \caption{OGBench-Cube.}
  \end{subfigure}
  \caption{Mean capacity probability over training for the ViT-Tiny \MN{} models with $d_{\max}=192$ for one seed.  Each curve is the capacity probability averaged over the training batch at each logged step.}
  \label{fig:capacity-dynamics}
\end{figure*}

\subsection{Mixed-dataset training: different capacities for different datasets}
\label{sec:mixed_training}
As a proof of concept, we test whether one capacity network can learn dataset-dependent capacities when a single model is trained jointly on DMC and TwoRoom. Figure~\ref{fig:mixed-capacity-dynamics} shows that the selector learns distinct distributions for the two datasets. DMC shifts toward a broad high-capacity distribution whose largest component is $k=64$, while retaining substantial mass on $k\in\{96,128,160\}$.  TwoRoom instead concentrates on the smaller $k\in\{16,32\}$ prefixes.  At the terminal test evaluation, DMC obtains $85.17\pm0.73\%$ success with mean selected test-time prefix length $\E[K_{\text{plan}}]=63.95\pm0.05$, whereas TwoRoom obtains only $36.17\pm1.01\%$ success with $\E[K_{\text{plan}}]=21.33\pm5.33$ (mean $\pm$ SEM over three seeds).  Thus, the experiment provides a simple demonstration that the capacity network can assign different capacities to different data while maintaining strong DMC performance.  However, TwoRoom performance degrades severely, so further work is required to preserve performance on both datasets during joint training.

\begin{figure*}[t]
  \centering
  \includegraphics[width=\linewidth]{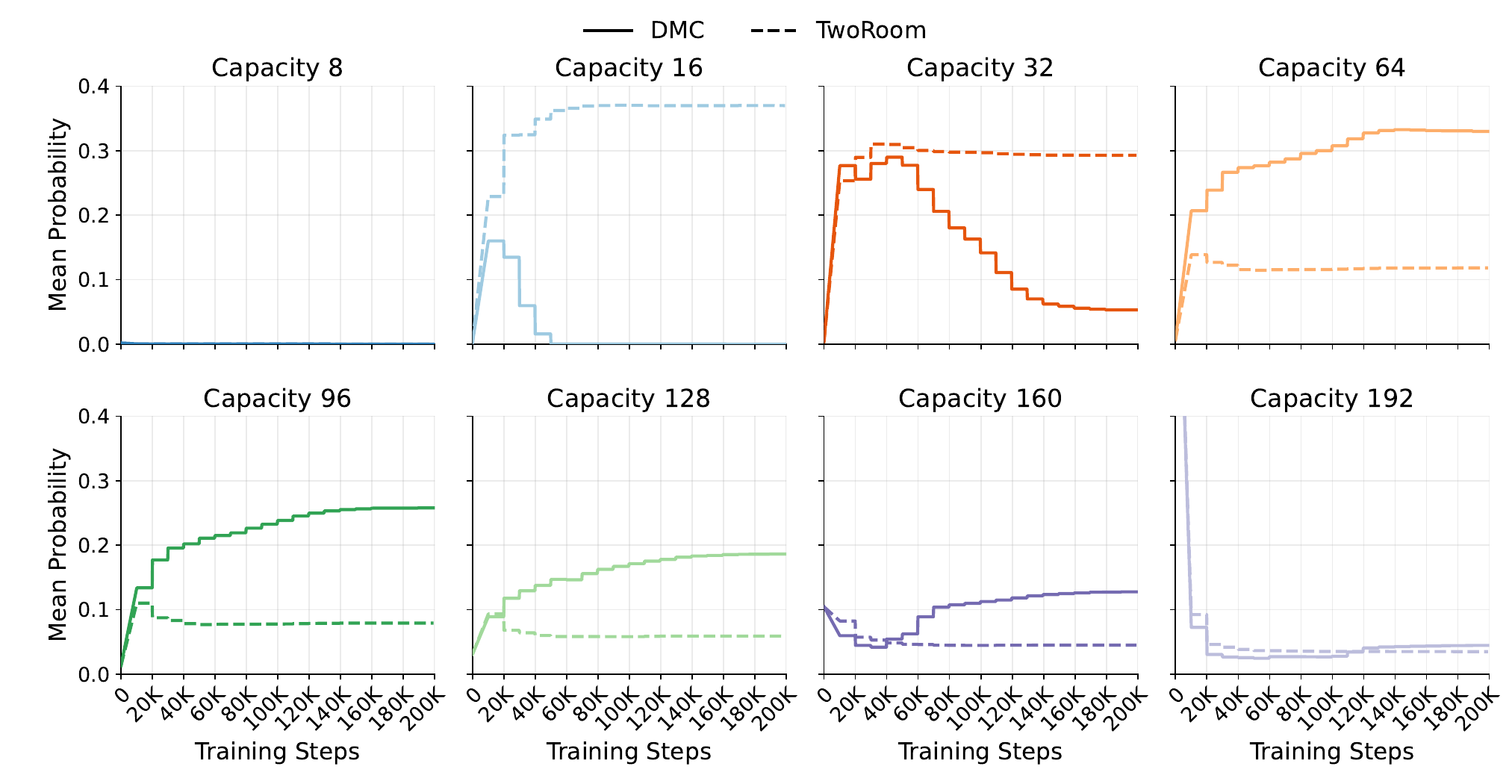}
  \caption{Dataset-conditioned capacity dynamics for joint DMC--TwoRoom training, averaged over three seeds at fixed steps.  Each panel corresponds to one capacity; solid and dashed curves show the mean selector probability for DMC and TwoRoom, respectively.  For better visibility, the vertical range is capped at $0.4$, thereby omitting the initial probability $0.837$ of capacity $k=192$.}
  \label{fig:mixed-capacity-dynamics}
\end{figure*}

\subsection{Reacher Capacity Allocation}
\label{sec:reacher_capacity}
Figure~\ref{fig:reacher-rollout-capacity-allocation} examines
per-episode capacity allocation for ViT-Tiny \MN{} models with
$d_{\max}=192$, evaluated on the same $200$ Reacher test episodes across three training seeds. For seeds $3072$ and $3073$, the selector tends to assign $K=64$ to episodes with smaller physical start--goal gaps. As the distance between the start and goal encoder representations increases, the probability ratio $q_\psi(K=64\mid\rvw)/q_\psi(K=32\mid\rvw)$ tends to decrease, favoring $K=32$.
Seed $3074$, however, selects $K=64$ for all episodes, indicating that the allocation pattern depends on the training seed, although all selections remain within the two neighboring supported capacities, $32$ and $64$. Within each seed and selected-capacity group, failures also tend to be more frequent for episodes with larger physical start--goal arm-tip gaps.

\begin{figure*}[p]
  \centering
  \captionsetup[subfigure]{skip=2pt}
  \begin{subfigure}[t]{0.6\linewidth}
    \centering
    \includegraphics[width=\linewidth]{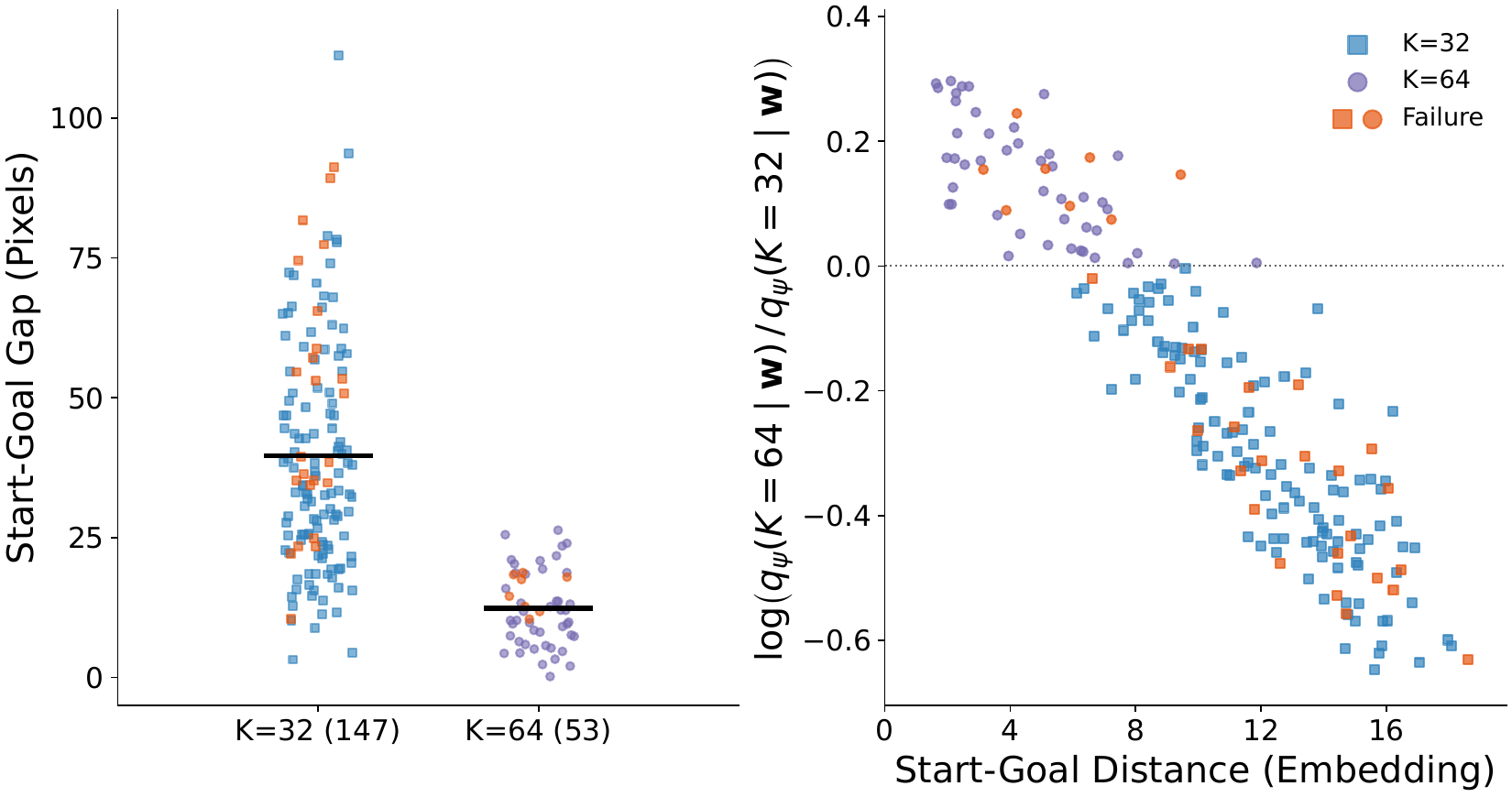}
    \caption{Training seed $3072$}
    \label{fig:reacher-capacity-seed3072}
  \end{subfigure}
  \begin{subfigure}[t]{0.6\linewidth}
    \centering
    \includegraphics[width=\linewidth]{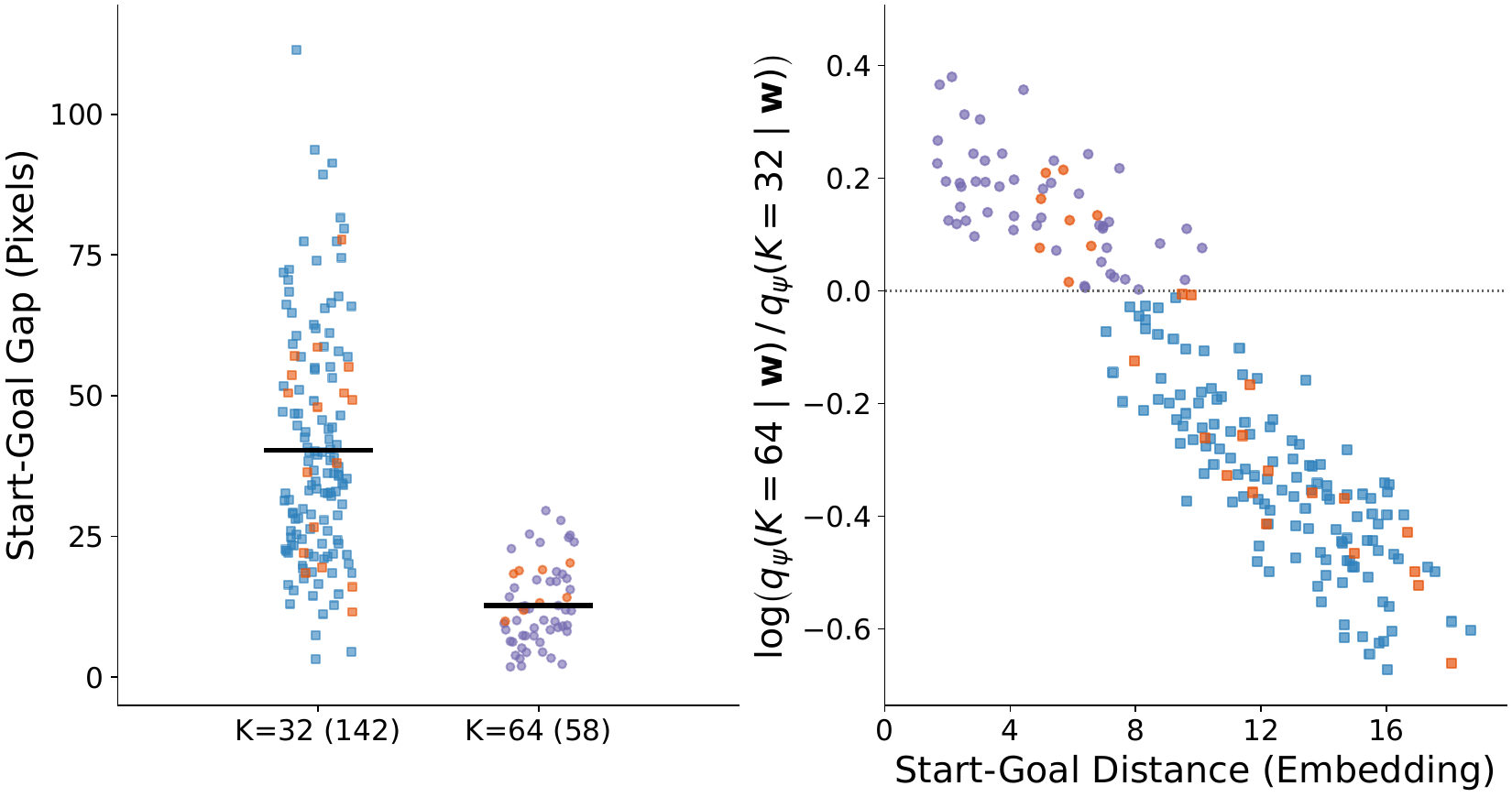}
    \caption{Training seed $3073$}
    \label{fig:reacher-capacity-seed3073}
  \end{subfigure}\par\smallskip
  \begin{subfigure}[t]{0.6\linewidth}
    \centering
    \includegraphics[width=\linewidth]{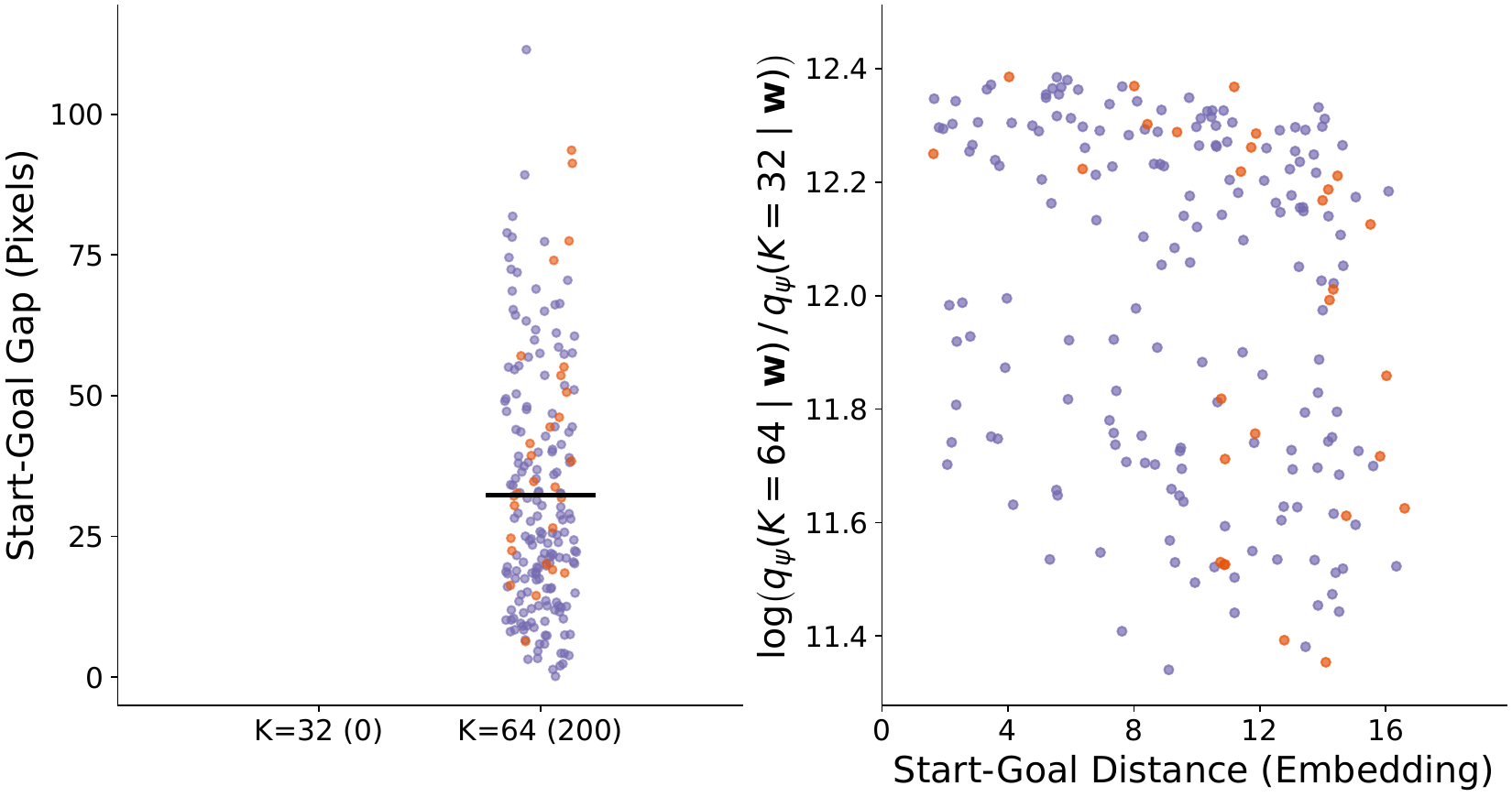}
    \caption{Training seed $3074$}
    \label{fig:reacher-capacity-seed3074}
  \end{subfigure}
  \caption{Reacher rollout capacity allocation for ViT-Tiny \MN{} with $d_{\max}=192$. Left: Euclidean start--goal arm-tip displacement in pixels, grouped by selected capacity; black bars indicate means and parentheses give episode counts. Right: the logged selector log probability ratio versus Euclidean distance between the start and goal encoder CLS tokens. Squares denote $K=32$ and circles $K=64$; blue and purple indicate successful rollouts, respectively, and orange indicates failures.}
  \label{fig:reacher-rollout-capacity-allocation}
\end{figure*}

%\textbf{Ablations.} Polynomial degree.
\subsection{Example Rollouts}
\label{sec:examples}
Figure~\ref{fig:example-rollouts} compares two matched test episodes \textit{randomly selected} per dataset for ViT-Tiny models trained with seed 3072, using $d_{\max}=192$ for \MN{} and the best-$d$ LeWM alternative. We select episodes on which \MN{} meets the environment's success criterion while LeWM does not, with both methods starting from the same state and targeting the same goal. These examples illustrate differences in goal-directed behavior between the two approaches. 

\begin{figure}[p]
  \centering
  \includegraphics[width=\linewidth,height=0.80\textheight,keepaspectratio]{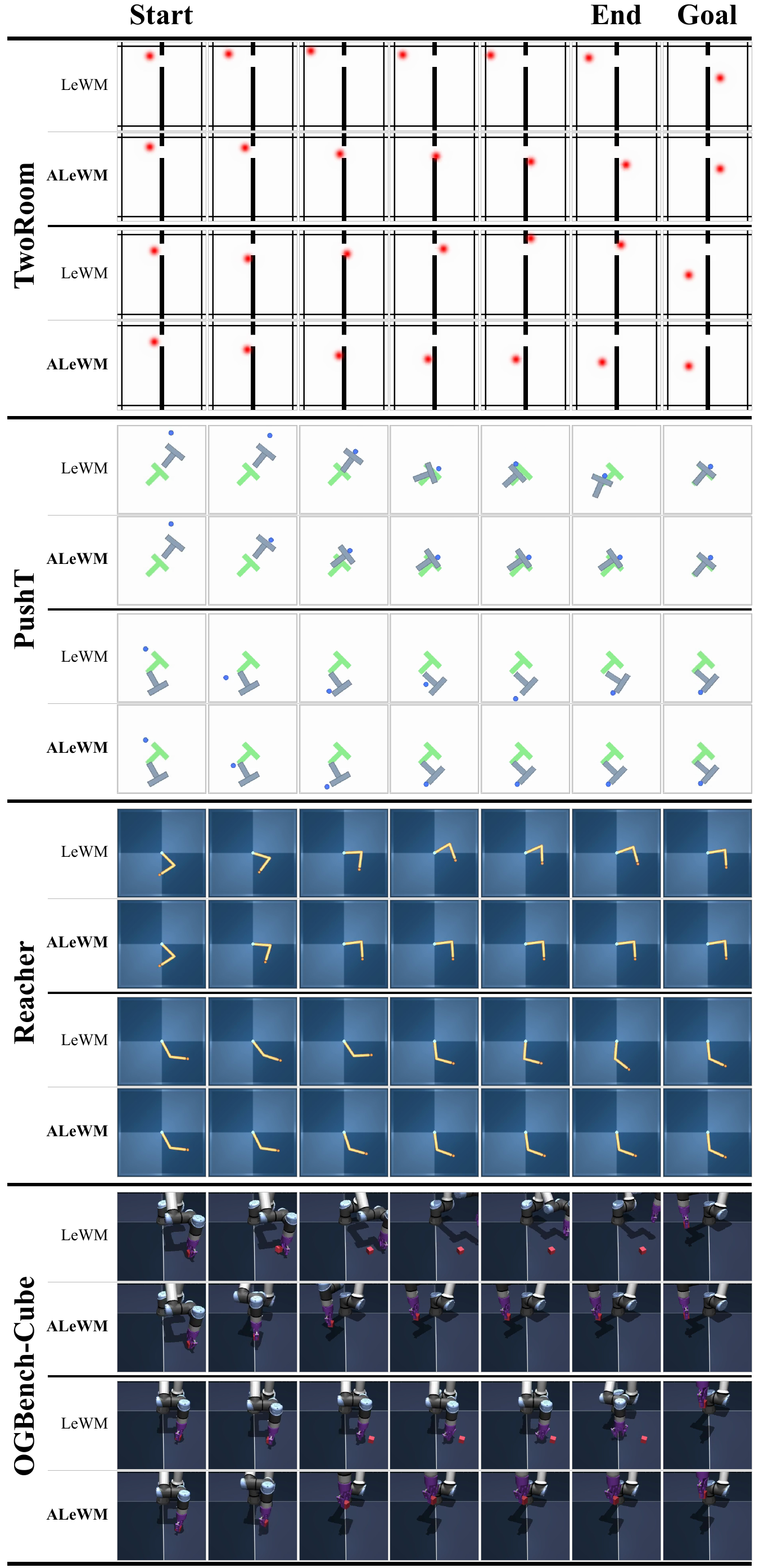}
  \caption{Qualitative comparison on two matched test episodes per dataset, selected for \MN{} success and LeWM failure. Each pair places LeWM above \MN{}. Columns show the initial observation, four approximately uniformly spaced intermediate observations, the final recorded observation, and the shared goal. Images are environment observations under model-planned
  actions.}
  \label{fig:example-rollouts}
\end{figure}

\section{Implementation Details}
\label{sec:implementation_details}
\subsection{Toy Example}
\label{sec:toy-implementation}

\paragraph{Problem and data.}
The state comprises two independent controlled damped oscillators,
$\rvx_t=(p^1_t,v^1_t,p^2_t,v^2_t)$.  For oscillator $m\in\{1,2\}$,
semi-implicit Euler integration gives
\begin{equation}
v^m_{t+1}=v_t^m+\Delta t[-\gamma v_t^m-\kappa(p_t^m-a_t^m)]
+\epsilon_t^m,\qquad
p^m_{t+1}=p_t^m+\Delta t\,v^m_{t+1}+\eta_t^m.
\end{equation}
We set $\Delta t=0.2$, $\gamma=0.25$, $\kappa=1.03887957$,
$\epsilon_t^m\sim\N(0,0.02730495^2)$, and
$\eta_t^m\sim\N(0,0.01365248^2)$.  Each component of the action is sampled
uniformly from $[-1.36524771,1.36524771]$ and held for four transitions.  We
run 256 burn-in transitions, then record nine states and eight actions per
trajectory.  The time step and damping yield stable, underdamped dynamics. The non-round constants reflect a scale calibration that gives position and velocity approximately unit stationary variance under the four-step action-hold policy, while retaining small process noise. The burn-in reduces dependence on the zero initial state. The train, validation, and test splits contain 5000, 1000, and
1000 trajectories.

The ten-dimensional observation contains four strong coordinates produced by an invertible nonlinear wave map followed by additive noise, and six weak coordinates of the form $a_o\sin(\rvw_o^\top\rvx_t+\phi_o)+\epsilon_{t,o}$,
for $o\in\{1,\ldots,6\}$.
The fixed dense vectors $\rvw_o$ mix all four factors and form
a rank-four matrix when stacked as rows. The fixed parameters
$\phi_o$ and $a_o$ specify the phase shift and noiseless signal amplitude, respectively. Measurement noise is independent, zero-mean Gaussian across coordinates, time steps, and trajectories, with standard deviations $0.3$ for weak coordinates and $0.05$ for strong coordinates. The weak coordinates therefore add nonlinear redundant measurements, not independent dynamical factors.

\paragraph{Models and optimization.}
We use an encoder network with two hidden layers of width 64, while the embedding projector network, the predictor, and \MN{}'s capacity network each have one hidden layer of width 64 with SiLU activations.  \MN{} uses maximum dimension eight and supports every prefix
$K=1,\ldots,8$; LeWM uses the fixed dimension specified by each baseline run.
We train for 200 epochs with batch size 256 using AdamW, learning rate
$10^{-3}$, weight decay $10^{-4}$, and gradient-norm clipping at $1.0$.  SIGReg
uses 17 knots and 64 random projections.  All data, initialization, sampling,
SIGReg, and diagnostic random streams use fixed seeds for consistency and reproducibility.

\paragraph{Metrics.}
All reported metrics use the test split.  For linear state recovery, a multivariate ridge probe with coefficient $10^{-6}$ is fitted on standardized training pairs for every prefix and time, $\rvs_t^{1:K}$, and applied unchanged to test
pairs.  We report the arithmetic mean of the four factor-wise coefficients of determination,
\begin{equation}
R_K^2=\frac{1}{4}\sum_{r=1}^4\left(1-
\frac{\sum_n(x_{n,r}-\hat x_{n,r}^{(K)})^2}
{\sum_n(x_{n,r}-\bar x_r)^2}\right),
\end{equation}
where $n$ indexes all test trajectory--time pairs,
$\bar{x}_r$ is the mean of factor $r$ over these pairs, and
$\hat{x}_{n,r}^{(K)}$ is the prediction from the corresponding
latent prefix $\rvs_t^{1:K}$ using the ridge probe fitted on
training data.
For the latent spectrum, we eigendecompose the empirical covariance (with divisor equal to sample
count) of all unmasked test embeddings, normalize its eigenvalues as
$\nu_j=\lambda_j/\sum_\ell\lambda_\ell$, and report
$d_{\mathrm{eff}}=\exp(-\sum_j\nu_j\log\nu_j)$.

For \MN{}, let
$\gamma_{b,j}=\Pr_{q_\psi}(K\geq j\mid\rvw_b)$ be the posterior survival
probability of coordinate $j$ for sequence $b$.  Its soft masked-mixture
variance over $B$ sequences and $T$ encoded time points is
\begin{equation}
v_j^{\mathrm{mix}}=
\frac{1}{BT}\sum_{b,t}\gamma_{b,j}(s_{b,t}^j)^2-
\left(\frac{1}{BT}\sum_{b,t}\gamma_{b,j}s_{b,t}^j\right)^2,
\end{equation}
which we compare with the prior survival
$\Pr_{\pi_0}(K\geq j)$ in Fig. \ref{fig:toy-oscillator}(d).  %Finally, whitened Procrustes uses the ordered
%four-coordinate \MN{} prefix.  Training embeddings and states are whitened,
%an orthogonal map $R^\star=\arg\min_{R^\top R=I}\|Y_wR-X_w\|_F^2$ is fitted,
%and the same training means, whiteners, and rotation are applied to test data.
\paragraph{Whitened Procrustes alignment.}
We examine whether the first four latent coordinates capture the
four-dimensional physical state up to a linear change of coordinates.
For each training trajectory--time pair $n$, let
$\rvy_n=\rvs_t^{1:4}$ denote the latent prefix and $\rvx_n$
the corresponding true state. Stacking their transposes as rows
gives matrices
$\rmY^{\mathrm{tr}},\rmX^{\mathrm{tr}}
\in\sR^{n_{\mathrm{tr}}\times4}$,
where $n_{\mathrm{tr}}$ is the number of training pairs.

We compute the training mean vectors $\boldsymbol{\mu}_y,\boldsymbol{\mu}_x$
and sample covariance matrices $\rmSigma_y,\rmSigma_x$.
Let $\rmY_c^{\mathrm{tr}}$ and $\rmX_c^{\mathrm{tr}}$
contain the centered rows
$(\rvy_n-\boldsymbol{\mu}_y)^\top$ and $(\rvx_n-\boldsymbol{\mu}_x)^\top$.
We apply regularized whitening:
\begin{equation}
\begin{aligned}
\rmY_w^{\mathrm{tr}}
&=\rmY_c^{\mathrm{tr}}
  (\rmSigma_y+\lambda_w\rmI_4)^{-1/2},\\
\rmX_w^{\mathrm{tr}}
&=\rmX_c^{\mathrm{tr}}
  (\rmSigma_x+\lambda_w\rmI_4)^{-1/2},
\end{aligned}
\end{equation}
where $\lambda_w=10^{-6}$ stabilizes the inverse.
Whitening normalizes coordinate scales and correlations, allowing
the alignment to focus on the correspondence between the two
representations.

We then fit the orthogonal transformation
\begin{equation}
\rmR^\star
=\arg\min_{\rmR^\top\rmR=\rmI_4}
\left\|\rmY_w^{\mathrm{tr}}\rmR
      -\rmX_w^{\mathrm{tr}}\right\|_F^2.
\end{equation}
This objective minimizes the total squared discrepancy between
paired, whitened representations. If
$(\rmY_w^{\mathrm{tr}})^\top\rmX_w^{\mathrm{tr}}
=\rmU\rmD\rmV^\top$
is its singular value decomposition, a solution is
$\rmR^\star=\rmU\rmV^\top$.
The orthogonality constraint permits rotations and reflections.

Finally, we apply the training means, whitening matrices, and
$\rmR^\star$ unchanged to the test data. The plot compares
$\rmY_w^{\mathrm{te}}\rmR^\star$ with $\rmX_w^{\mathrm{te}}$,
so its coordinates are expressed in whitened space.
Close agreement supports recovery of state information from the
prefix up to a linear transformation; it does not require individual
latent coordinates to correspond directly to individual state factors.

\paragraph{Hyperparameter search.}
We search over MixSIGReg/SIGReg weight in
$\{0.001,0.0025,0.005,0.0075,0.01,0.05,0.1,0.2,0.3,0.5,1,5\}$.  LeWM searches over fixed dimension
$d\in\{2,3,4,5,6,7,8\}$, while the \MN{} sweep searches polynomial prior degree
in $\{-3,-1.5,-0.5, 0, 0.5, 1.5, 3\}$.  The main text reports degree $-1.5$ with MixSIGReg weight $0.01$ for \MN{}, and
SIGReg weights $0.005$, $0.005$, and $0.0075$ for LeWM dimensions $4$, $6$, and $8$, respectively.

\subsection{Main Results}
\label{sec:main-results-implementation}

\paragraph{Data and preprocessing.}
For TwoRoom, PushT, Reacher, and OGBench-Cube, we follow the environments, offline-data collection, and goal-conditioned evaluation protocol of LeWM; Appendices~D--F.1 of \citet{maes2026leworldmodel} provide the remaining environment and collection details. We create a new episode-disjoint partitions. Reacher, OGBench-Cube, and TwoRoom each contain 10,000 episodes, split into 9,700/100/200 train/validation/test episodes. Validation is used for hyper-parameter selection. The available PushT file contains 18,685 episodes, split into 18,385/100/200 episodes. Images are resized to $224\times224$.  Training examples contain four frames:
three context frames and one prediction target.  Each successive pair is
separated by five environment steps, and the intervening actions are flattened
into a five-action block.  We train and evaluate three model seeds, 3072, 3073,
and 3074.  The success rates in
Table~\ref{tab:main-control-results} first average over the 200 held-out test
episodes for each seed and then report the mean and sample SEM over the three
seeds.

\paragraph{World-model architecture.}
We retain the LeWM architecture and refer to
\citet[Appendix~D]{maes2026leworldmodel} for details not specified here.  The
$d_{\max}\in\{96,192\}$ models use a randomly initialized ViT-Tiny/14 image
encoder (12 blocks, 3 attention heads, and encoder width 192), while the
$d_{\max}=384$ models use ViT-Small/14 (12 blocks, 6 heads, and encoder width
384).  The encoder CLS token is mapped to the latent state by a one-hidden-layer
MLP with hidden width 2048 and batch normalization.  The action-conditioned
autoregressive predictor has six transformer blocks, 16 attention heads, an
MLP width of 2048, and dropout 0.1.  The encoded action blocks condition its
transformer blocks through adaptive layer normalization; a second
one-hidden-layer MLP projects the output to the prediction target.

\paragraph{Capacity network.}
\MN{} adds the categorical selector $q_\psi$.  It takes the raw per-frame image
encoder CLS tokens, before the latent projector, and applies four pre-norm
transformer-encoder blocks with eight heads, feed-forward width 768. A final layer
normalization and mean over the frame-token axis are followed by a linear
$C$-class head, whose logits parameterize the supported capacities.  Self
attention without positional embeddings is permutation equivariant and the
mean is permutation invariant.  Consequently, the architecture and parameter
count do not depend on the number of frames: the same network consumes all
four training frames and the two-image initial/goal context at evaluation.
The selector uses only supplied observations: local trajectory frames during training and the available initial and goal images at deployment. Permutation invariance makes their order irrelevant, and mean pooling allows variable frame counts. While adding or removing observations can change the output, in our short-horizon, controlled tasks, we expect capacity requirements to remain broadly similar across these contexts, although this does not establish calibration under the context shift. Longer or less controlled settings are an interesting direction for studying this dependence. Closed-loop planning could also update capacity using newly observed history, whereas our evaluation keeps it fixed per episode. During training, we intentionally conditioned the capacity network on the entire window of frames, and specifically the predicted frames, as it simulates testing environment where we access to goal frame and possibly other future frames. The CLS tokens are detached on the selector branch, so selector gradients do not flow back into the image encoder. The capacity network contains $1,780,805$, $1,781,384$, and $4,740,491$ trainable parameters for $d_{\max}=96,192,384$, respectively, excluding the shared image encoder.

% For encoder-token width $D_e$, feed-forward width $m$, $L$ transformer blocks,
% and $C$ supported capacities, the trainable parameter count is
% $L\left(4D_e^2+2D_em+9D_e+m\right)+2D_e+C(D_e+1)$.
% With $L=4$ and $m=768$, this gives 1,780,805 parameters for
% $d_{\max}=96$ ($D_e=192,C=5$), 1,781,384 for $d_{\max}=192$
% ($D_e=192,C=8$), and 4,740,491 for $d_{\max}=384$
% ($D_e=384,C=11$).  These counts, obtained directly from the instantiated
% PyTorch modules, exclude the image encoder, latent projectors, predictor, and
% action encoder.

\paragraph{Optimization and regularization.}
All models are trained end-to-end for 10 epochs with batch size 128 using
AdamW, learning rate $5\times10^{-5}$, weight decay $10^{-3}$, bfloat16 mixed
precision, and gradient-norm clipping at 1.0.  Training is deterministic for
each seed. SIGReg and MixSIGReg use 1,024 random projections and the Gaussian frequency window $w(\tau)=\exp(-\tau^2/2)$. We truncate the integral to $[-3,3]$ and exploit the even squared characteristic-function discrepancy to evaluate twice the trapezoidal rule on $[0,3]$. The 17 knots are $\tau_j=3j/16$, $j=0,\ldots,16$, with spacing $h=3/16$ and weights $a_0=a_{16}=h$ and $a_j=2h$ for $1\leq j\leq15$, each multiplied by $w(\tau_j)$. The weighted discrepancy is multiplied by batch size $B$ and averaged over projections and frame positions in both regularizers.  For \MN{}, the full next embedding is the prediction target,
the sampled prefix is obtained with the straight-through Gumbel--Softmax
estimator at a constant relaxation temperature of 0.5.  The selector head is initialized to the polynomial
prior.  We set the selector learning rate to
$s_q(5\times10^{-5})$, where the multiplier $s_q$ is a hyper-parameter as reported below.

The LeWM candidate grids contain the fixed latent widths
$\{8,16,32,64,96\}$ for the 96-dimensional block,
$\{8,16,32,64,96,128,160,192\}$ for the 192-dimensional block, and $\{8,16,32,64,96,128,160,192,256,320,384\}$ for the 384-dimensional block. The corresponding \MN{} capacity supports are fixed and set to $\{8,16,32,64,96\}$, $\{8,16,32,64,96,128,160,192\}$, and
$\{8,16,32,64,96,128,160,192,256,320,384\}$, respectively.  Both methods sweep the MixSIGReg/SIGReg coefficient over task-specific subsets of $\{0.09,0.15,0.2,0.3\}$.  \MN{} sweeps the selector learning-rate multiplier in $\{1, 2, 3\}$ and polynomial-prior degree in $\{-1, -0.5, 0, 1, 2, 3\}$.  Table~\ref{tab:main-selected-hparams}
records the exact \MN{} configurations used in
Table~\ref{tab:main-control-results}.

\begin{table}[H]
  \caption{Selected \MN{} hyperparameters for the main control results.
  $\lambda$ is the MixSIGReg coefficient, $s_q$ multiplies the base learning
  rate for $q_\psi$, and $\alpha$ is the polynomial-prior degree.}
  \label{tab:main-selected-hparams}
  \centering
  \setlength{\tabcolsep}{5pt}
  \begin{tabular}{rlrrr}
    \toprule
    $d_{\max}$ & Dataset & $\lambda$ & $s_q$ & $\alpha$ \\
    \midrule
     96 & TwoRoom      & 0.15 & 3 & 3 \\
     96 & PushT        & 0.09 & 2 & $-1$ \\
     96 & Reacher      & 0.30 & 3 & 2 \\
     96 & OGBench-Cube & 0.30 & 3 & 1 \\
    \midrule
    192 & TwoRoom      & 0.30 & 3 & 3 \\
    192 & PushT        & 0.15 & 2 & $-0.5$ \\
    192 & Reacher      & 0.30 & 3 & $-0.5$ \\
    192 & OGBench-Cube & 0.20 & 3 & 1 \\
    \midrule
    384 & TwoRoom      & 0.20 & 2 & 3 \\
    384 & PushT        & 0.09 & 1 & $-0.5$ \\
    384 & Reacher      & 0.20 & 2 & 1 \\
    384 & OGBench-Cube & 0.15 & 2 & 0 \\
    \bottomrule
  \end{tabular}
\end{table}

\begin{table*}[p]
  \caption{Per-seed planning-capacity counts for the selected \MN{}
  configurations in Table~\ref{tab:main-control-results}.  An entry $k{:}\ n$
  means that $K_{\mathrm{plan}}=k$ for $n$ episodes; capacities that were
  never selected are omitted.  Test contains 200 episodes per seed.}
  \label{tab:main-planning-capacity-counts}
  \centering
  \scriptsize
  \setlength{\tabcolsep}{6pt}
  \renewcommand{\arraystretch}{0.92}
  \begin{tabular}{rrrl}
    \toprule
    Backbone ($d_{\max}$) & Dataset & Seed & $K_{\mathrm{plan}}$: \#rollouts \\
    \midrule
    \multirow{12}{*}{ViT-Tiny (96)}
      & \multirow{3}{*}{TwoRoom} & 3072 & $16{:}\ 200$ \\
      & & 3073 & $8{:}\ 200$ \\
      & & 3074 & $16{:}\ 200$ \\
    \cmidrule(lr){2-4}
      & \multirow{3}{*}{PushT} & 3072 & $32{:}\ 200$ \\
      & & 3073 & $32{:}\ 200$ \\
      & & 3074 & $32{:}\ 200$ \\
    \cmidrule(lr){2-4}
      & \multirow{3}{*}{Reacher} & 3072 & $32{:}\ 200$ \\
      & & 3073 & $16{:}\ 200$ \\
      & & 3074 & $16{:}\ 200$ \\
    \cmidrule(lr){2-4}
      & \multirow{3}{*}{OGBench-Cube} & 3072 & $16{:}\ 200$ \\
      & & 3073 & $16{:}\ 200$ \\
      & & 3074 & $16{:}\ 200$ \\
    \midrule
    \multirow{12}{*}{ViT-Tiny (192)}
      & \multirow{3}{*}{TwoRoom} & 3072 & $8{:}\ 200$ \\
      & & 3073 & $8{:}\ 200$ \\
      & & 3074 & $8{:}\ 200$ \\
    \cmidrule(lr){2-4}
      & \multirow{3}{*}{PushT} & 3072 & $64{:}\ 200$ \\
      & & 3073 & $32{:}\ 197,\ 64{:}\ 3$ \\
      & & 3074 & $64{:}\ 200$ \\
    \cmidrule(lr){2-4}
      & \multirow{3}{*}{Reacher} & 3072 & $32{:}\ 147,\ 64{:}\ 53$ \\
      & & 3073 & $32{:}\ 142,\ 64{:}\ 58$ \\
      & & 3074 & $64{:}\ 200$ \\
    \cmidrule(lr){2-4}
      & \multirow{3}{*}{OGBench-Cube} & 3072 & $16{:}\ 200$ \\
      & & 3073 & $16{:}\ 200$ \\
      & & 3074 & $16{:}\ 200$ \\
    \midrule
    \multirow{12}{*}{ViT-Small (384)}
      & \multirow{3}{*}{TwoRoom} & 3072 & $8{:}\ 196,\ 16{:}\ 4$ \\
      & & 3073 & $8{:}\ 195,\ 16{:}\ 5$ \\
      & & 3074 & $8{:}\ 197,\ 16{:}\ 3$ \\
    \cmidrule(lr){2-4}
      & \multirow{3}{*}{PushT} & 3072 & $64{:}\ 200$ \\
      & & 3073 & $32{:}\ 162,\ 64{:}\ 38$ \\
      & & 3074 & $64{:}\ 200$ \\
    \cmidrule(lr){2-4}
      & \multirow{3}{*}{Reacher} & 3072 & $32{:}\ 200$ \\
      & & 3073 & $32{:}\ 200$ \\
      & & 3074 & $32{:}\ 200$ \\
    \cmidrule(lr){2-4}
      & \multirow{3}{*}{OGBench-Cube} & 3072 & $32{:}\ 200$ \\
      & & 3073 & $32{:}\ 200$ \\
      & & 3074 & $32{:}\ 200$ \\
    \bottomrule
  \end{tabular}
\end{table*}

\paragraph{Planning and evaluation.}
The four control environments use CEM with 300 candidate action
sequences, initial variance 1, 30 refinement iterations, and 30 elites.  The
planning horizon is five action blocks of five actions each.  These tasks
execute five blocks before replanning. Each held-out episode is initialized from an offline trajectory,
uses the observation 25 environment steps later as its goal, and has a
50-step interaction budget.  Standard-task rollouts are evaluated in batches of 50. %The evaluation random seed is 42. 
For \MN{}, $q_\psi$ processes the initial and goal images once per episode,
the modal supported capacity is cached across CEM iterations and replans, and
both recursive dynamics rollout and goal distance use only that active prefix.
LeWM uses its entire fixed-width representation.  These choices match the
``active'' target-dimension setting reported in the main table.

\end{document}